\def\eqref#1{equation~\ref{#1}}
\def\1{\bm{1}}
\DeclareMathAlphabet{\mathsfit}{\encodingdefault}{\sfdefault}{m}{sl}
\SetMathAlphabet{\mathsfit}{bold}{\encodingdefault}{\sfdefault}{bx}{n}
\newcommand{\E}{\mathbb{E}}
\newcommand{\Var}{\mathrm{Var}}
\DeclareMathOperator*{\argmin}{arg\,min}
\newtheorem*{rep@theorem}{\rep@title}
\newcommand{\newreptheorem}[2]{%
\newenvironment{rep#1}[1]{%
 \def\rep@title{#2 \ref{##1}}%
 \begin{rep@theorem}}%
 {\end{rep@theorem}}}
\newtheorem{theorem}{Theorem}
\newtheorem{lemma}{Lemma}
\newtheorem{definition}{Definition}
\newtheorem*{theorem*}{Theorem}
\DeclareMathOperator{\indi}{\mathbbm{1}}
\DeclareMathOperator{\bias}{\mathbf{bias}}
\newcommand{\ip}[1]{\left\langle #1 \right\rangle}
\DeclareMathOperator{\Reg}{\mathrm{Reg}}
\DeclareMathOperator{\BiasFifth}{\mathrm{Bias5}}
\DeclareMathOperator{\BiasForth}{\mathrm{Bias4}}
\DeclareMathOperator{\Tl}{\mathcal{T}^{\ell}}
\DeclareMathOperator{\Te}{\mathcal{T}_e}
\DeclareMathOperator{\Tel}{\mathcal{T}_e^{\ell}}
\DeclareMathOperator{\Tf}{\mathcal{T}^{f}}
\DeclareMathOperator{\Tef}{\mathcal{T}_e^{f}}
\DeclareMathOperator{\His}{\mathcal{H}}
\title{High Probability Bound for Cross-Learning Contextual Bandits with Unknown Context Distributions}
\author{Ruiyuan Huang\\
School of Data Science\\
Fudan University\\
Shanghai, China\\
\texttt{RuiyuanHuang00@gmail.com} \\
\And
Zengfeng Huang\thanks{Corresponding Author}\\
School of Data Science\\
Fudan University\\
Shanghai, China\\
\texttt{huangzf@fudan.edu.cn} \\
% \AND
% Coauthor \\
% Affiliation \\
% Address \\
% \texttt{email}
}
\begin{document}

\maketitle

\begin{abstract}
%Problem Statement
Motivated by applications in online bidding and sleeping bandits, we examine the problem of contextual bandits with cross learning, where the learner observes the loss associated with the action across all possible contexts, not just the current round's context. Our focus is on a setting where losses are chosen adversarially, and contexts are sampled i.i.d.\ from a specific distribution. This problem was first studied by \citet{balseiro2019contextual}, who proposed an algorithm that achieves near-optimal regret under the assumption that the context distribution is known in advance. However, this assumption is often unrealistic.
%Our contribution
To address this issue, \citet{Zimmert2023} recently proposed a new algorithm that achieves nearly optimal expected regret. It is well-known that expected regret can be significantly weaker than high-probability bounds. In this paper, we present a novel, in-depth analysis of their algorithm and demonstrate that it actually achieves near-optimal regret with \emph{high probability}.
% Techniques
There are steps in the original analysis by \citet{Zimmert2023} that lead only to an expected bound by nature. In our analysis, we introduce several new insights. Specifically, we make extensive use of the weak dependency structure between different epochs, which was overlooked in previous analyses. Additionally, standard martingale inequalities are not directly applicable, so we refine martingale inequalities to complete our analysis.

\end{abstract}

\section{Introduction}
%What is contexual bandits
In the contextual bandits problem, a learner repeatedly observes a context, chooses an action, and incurs a loss specific to that action. The goal of the learner is to minimize the cumulative loss over the time horizon. The contextual bandits problem is a fundamental problem in online learning having broad applications in fields like online advertising, personalized recommendations, and clinical trials~\citep{background_Schapire_2010,backgroundref_Kale_2010,background_villar_2015}.

%Cross-Learning Problem, applications
We consider the cross-learning contextual bandits problem. In this setting, the learner not only observes the loss for the current action under the current context, but also observes the loss for the current action under all other contexts. This problem models many interesting scenarios. One such example is the problem of learning to bid in first-price auctions. In this problem the context is the bidder's private value for the item, while the action is the bid. The cross-learning structure comes from the fact that the bidder can deduce the utility of the bid under all contexts (i.e., the utility of the bid under different private valuations for the item). Other examples include multi-armed bandits with exogenous costs, dynamic pricing with variable costs, and learning to play in Bayesian games~\citep{balseiro2019contextual}. 

%Zimmert Results
Technically, the most interesting setting for the cross-learning contextual bandits problem is when the losses are chosen adversarially but the contexts are i.i.d.\ samples from an \textit{unknown} distribution $\nu$. Recently, \citet{Zimmert2023} gave an algorithm achieving nearly optimal $\widetilde{O}(\sqrt{KT})$ expected regret in this scenario.

%Their problem.
\citet{Zimmert2023} designed a sophisticated algorithm that operates over multiple epochs to achieve near-optimal regret. A key technique in their analysis is to sidestep high-probability bounds and instead focus on bounding the expected summation to improve their results. As a consequence, their analysis only provides a bound that holds in expectation. It is not immediately clear whether this is due to limitations in the analysis or if the algorithm is inherently suboptimal. In any case, if we aim for a high-probability bound, fundamentally new insights are required.

%Our contribution
In this paper, we show that the algorithm indeed achieves nearly optimal $\widetilde{O}(\sqrt{KT})$ regret with high-probability.
The key contribution of our paper is the following theorem. 
\begin{reptheorem}{thm:high_probability_main}[Informal]
    The algorithm in \citet{Zimmert2023}  yields a regret bound of order $\widetilde{O}(\sqrt{KT})$ with high probability for any policy $\pi$.
\end{reptheorem}
In this section we only give the informal version of \Cref{thm:high_probability_main}. The formal version can be found in \Cref{sec:main_analysis}.

\subsection{Technical Overview}
\label{subsec:technical_overview}
Our theorem is built on a new and more in-depth analysis of the algorithm in \citet{Zimmert2023}. This new analysis introduces several new insights. In particular, we exploit the weak dependency structure between different epochs, which was overlooked in previous work. One difficulty of doing so is that standard martingale inequalities are not directly applicable, so we refine martingale inequalities to complete our analysis.

%介绍 Loss Estimates, Epoch Decomposition, importance estimation.
To prepare the readers for our new analysis, we first briefly introduce the algorithm in \citet{Zimmert2023}. The algorithm in \citet{Zimmert2023} is an EXP3-type algorithm. The key novelty in their algorithm is the construction of the loss estimates $\widehat{\ell}$ used in the FTRL subroutine.
Due to some technical problems we detail later, the algorithm decomposes the time horizon into epochs of equal length.
In each epoch $e$, the algorithm first estimates the probability\footnote{For technical reasons, in the actual algorithm, the value $f_e(a)$ actually represents the probability of observing the reward of each arm $a$ in epoch $e+2$. For ease of understanding, here we instead let it represent the probability of observing the reward of each $\operatorname{arm} a$ in each epoch $e$.} $f_e(a)$ of observing the reward of each arm $a$ in epoch $e$ by an estimator $\widehat{f}_e(a)$, which is constructed exclusively from samples in epoch $e-1$.
Note that thanks to the cross-learning structure, the probability of observing the reward of each arm $a$ is independent of the contexts.
The algorithm then constructs the loss estimates as an importance-weighted estimator with $\frac{1}{\widehat{f}_e(a)}$ as the importance weight.

\citet{Zimmert2023} showed that the performance of the algorithm depends on how well the empirical importance weight $\frac{1}{\widehat{f}_{e}(a)}$ concentrates around the expected importance weight $\frac{1}{f_e(a)}$.
Since the estimator $\widehat{f}_e(a)$ is constructed exclusively from samples in a single epoch rather than the entire time horizon, the concentration $|\frac{1}{f_e(a)} - \frac{1}{\widehat{f}_e(a)}|$ is not tight enough.
To achieve the desired $\widetilde{O}(\sqrt{KT})$ regret under a not tight enough concentration, \citet{Zimmert2023} bounds only the expected bias of importance estimator $\E[\frac{1}{f_{e}(a)} - \frac{1}{\widehat{f}_e(a)}]$ rather than providing a high-probability bias bound.
Bounding only the expected bias gives a small enough bound, however, they can achieve a bound only on the expected regret from a bound on the expected bias.

%我们通过不同epoch之间的组合保证了high-probability bound。
%
We overcome this difficulty and show that their algorithm actually achieves a high-probability bound.
Our key observation is that different epochs in their algorithm  are only weakly dependent on each other.
Thus, the bias $\frac{1}{\widehat{f}_e(a)} - \frac{1}{f_e(a)}$ for each epoch $e$ is also only weakly dependent on each other.
Therefore, although we cannot establish a small enough bound for the bias of a single epoch $\frac{1}{f_e(a)} - \frac{1}{\widehat{f}_e(a)}$, we can give a small enough bound for the cumulative bias across all epochs $\sum_e \frac{1}{f_e(a)} - \frac{1}{\widehat{f}_e(a)}$. 
We then use the bound on the cumulative bias to bound the cumulative regret.
%Thus, although we cannot give a small enough bound for the bias of one epoch, we can give a small enough bound for the cumulative bias of all epochs by utilizing the weak dependence structures between epochs. 
% %
% We can get a bound on the cumulative regret.

%新问题：1新分解
In addition to utilizing the weak dependency structure between different epochs, we also address two further technical difficulties to establish our result. 
The first difficulty is that the existing regret decomposition is too crude to yield a high-probability bound.
\citet{Zimmert2023} establish an $\widetilde{O}(\sqrt{KT})$ expected regret by decomposing the regret into different parts and bounding each part separately. Although their decomposition gives an $\widetilde{O}(\sqrt{KT})$ expected regret bound, it is too crude to derive a tight high-probability regret bound, even after utilizing the weak dependency structure.
We carefully rearrange the regret decomposition to address this difficulty.

%新问题：2集中不等式
Secondly, we cannot simply apply standard martingale concentration inequalities to $\sum_e \frac{1}{f_e(a)} - \frac{1}{\widehat{f}_e(a)}$ to bound its deviation.
The main problem is that the random variable  $\frac{1}{f_e(a)} - \frac{1}{\widehat{f}_e(a)}$ is not almost surely bounded by a constant, which makes standard martingale concentration inequalities inapplicable.
We introduce a surrogate sequence of random variables as a bridge to address this problem.
We bound the sum over the surrogate sequence, and show that the sum over the real sequence is equal to the surrogate sequence with high probability.

\subsection{Related Works}
%Cross-Learning 
The cross-learning contextual bandits problem was first proposed in \citet{balseiro2019contextual}. They achieve the nearly optimal $\widetilde{O}(\sqrt{KT})$ regret under two scenarios: (1) when both losses and contexts are stochastic, and (2) when losses are adversarial and contexts are stochastic with a known distribution. When losses are adversarial and contexts are stochastic with an unknown distribution, they only achieve the suboptimal $\widetilde{O}(K^{1/3}T^{2/3})$ regret. More recently, \citet{Zimmert2023} gave a new algorithm that achieves the nearly optimal $\widetilde{O}(\sqrt{KT})$ regret in expectation under adversarial losses and stochastic contexts with an unknown distribution.

%What is bidding
An important application of the cross-learning contextual bandits problem, which is also the primary motivation for proposing this problem in \citet{balseiro2019contextual}, is to solve the problem of learning to bid in first-price auctions. 
In this problem the context is the bidder's private value for the item, while the action is the bid. The cross learning structure comes from the fact that the bidder can deduce the utility of the bid under all contexts (i.e., the utility of the bid under different private valuations for the item). 

%Bidding Result
\citet{balseiro2019contextual} used the cross-learning contextual bandits problem to model the bidding problem and obtained an $O(T^{3/4})$ regret bound for bidders with an unknown value distribution participating in adversarial first-price auctions, where the only feedback is whether the bidder wins the auction. Later, many works studied different settings of the bidding in first price auctions problem. For example, \citet{Han2020Stochastic} considered the problem with censored feedback, where each bidder observes the winning bid. \citet{Han2020Adversarial} considered the scenario when the value is also adversarial. \citet{Aiauction,WangAuction} considered the problem under budget constraints. In all these scenarios, the cross learning structure between different values is an essential component of the analysis.

%Sleeping Bandit
Another interesting application of the cross-learning contextual bandits problem is the sleeping bandits problem~\citep{Kleinberg2010,Neu2014,Kale16,Saha2020}. In this problem, a certain set of arms is unavailable in each round. The sleeping bandits problem is motivated by instances like  some items might go out of stock in retail stores or on a certain day some websites could be down. When losses are adversarial and availabilities are stochastic, previous work either requires exponential computing time~\citep{Kleinberg2010,Neu2014} or results in suboptimal regret~\citep{Kale16,Saha2020}. The first computationally efficient algorithm with optimal regret $\widetilde{O}(\sqrt{KT})$ is proposed in \citet{Zimmert2023} by modeling the problem as a cross-learning contextual bandit.

%Furthernote unkonwn
We also note that handling unknown context distributions is a common and challenging problem across various contextual bandit problems. For example, in the adversarial linear contextual bandits problem~\citep{Neu2020contextual}, the linear MDP problem~\citep{Dai2023}, and the oracle-based adversarial contextual bandits problem~\citep{Luo2016}, existing algorithms often rely on knowledge of the context distribution. Removing the reliance on knowledge of the context distribution is typically non-trivial~\citep{ZimmertBypass,Dai2023}.

\section{Problem Statament}

We study a contextual $K$-armed bandit problem over $T$ rounds, with contexts belonging to the set $[C]$. At the beginning of the problem, an oblivious adversary selects a sequence of losses $\ell_{t,c}(a) \in [0,1]$ for every round $t \in[T]$, every context $c \in [C]$, and every arm $a \in[K]$. In each round $t$, we begin by sampling a context $c_t \sim \nu$ i.i.d. from an unknown distribution $\nu$ over $[C]$, and we reveal this context to the learner. Based on this context, the learner selects an arm $a_t \in[K]$ to play. The adversary then reveals the function $\ell_{t,c}(a_t)$, and the learner suffers loss $\ell_{t, c_t}\left(a_t\right)$. Notably, the learner observes the loss for every context $c \in [C]$, but only for the arm $a_t$ they actually played.

We aim to design learning algorithms that minimize regret. Fix a policy $\pi : [C] \rightarrow [K]$. With a slight abuse of notation, we also denote $\pi_c = e_k \in \Delta([K])$ for each $c \in [C]$. The (unexpected) regret with respect to policy $\pi$ is
\[\Reg(\pi) = \sum_{t=1}^T \ell_{t, c_t}(a_t) -  \ell_{t, c_t}(\pi_{c_t}) .\]
We aim to upper bound this quantity (for an arbitrary policy $\pi$).

% In particular, letting $\Pi=\{\pi:[C] \rightarrow$ $[K]\}$, we define \[ \Reg=\max _{\pi \in \Pi} \sum_{t=1}^T \ell_{t, c_t}\left(a_t\right)-\ell_{t, c_t}\left( \pi(c_t) \right).\]
%
\citet{Zimmert2023} designed an algorithm that achieves an expected regret bound of $\E\left[ \Reg(\pi) \right] \le \widetilde{O}(\sqrt{KT})$ for any policy $\pi$. We will show that the algorithm in \citet{Zimmert2023} actually provides a high-probability regret bound.
% Here the expectation is defined both over the randomness of the algorithm and the randomness of the contexts.
%

%\citet{Zimmert2023} showed that Algorithm1 yields an expected regret bound of $\E[\Reg(\pi)]  = \widetilde{\Theta}(\sqrt{KT})$. Our theorem extends this result. We show that the exact same algorithm actually yields a high-probability regret bound.

% Fix a policy $\pi : [C] \rightarrow [K]$. With a slight abuse of notation, we also denote $\pi_c = e_k \in \Delta([K])$ for each $c \in [C]$. The (unexpected) regret with respect to policy $\pi$ is
% \[\Reg(\pi) = \sum_{t=1}^T \ell_{t, c_t}(a_t) -  \ell_{t, c_t}(\pi_{c_t}) .\]
% We would like to upper bound this quantity (for an arbitrary $\pi$).

% We will prove the following result.
% \begin{theorem}
% %\label{thm:high_probability_main}
%     For $\iota=2 \log (8 K T)$, $L=\sqrt{\frac{\iota K T}{\log (K)}}=\widetilde{\Theta}(\sqrt{K T})$, $\gamma=\frac{16 \iota}{L}=\widetilde{\Theta}(1 / \sqrt{K T})$, and $\eta=\frac{\gamma}{2(2 L \gamma+\iota)}=\widetilde{\Theta}(1 / \sqrt{K T})$, Algorithm1  yields a regret bound of $\Reg(\pi)= \widetilde{\Theta}(\sqrt{KT})$ with high probability.
% \end{theorem}

\section{The Algorithm in Schneider and Zimmert (2023)}
In this section, we briefly recap the intuition behind the algorithm proposed in \citet{Zimmert2023} and redescribe the algorithm formally to prepare the readers for our new analysis.

\subsection{Intuition behind Schneider and Zimmert (2023)}
%不直接展开算法，Informal介绍

%介绍zimmert基本想法：FTRL
The algorithm proposed in \citet{Zimmert2023} is an EXP3-type algorithm. 
Similar to the well-known EXP3 algorithm, at each round $t$, the algorithm generates a distribution using an FTRL subroutine \[p_{t,c} = \argmin_{p \in \Delta([K])} \left\langle p, \sum_{s=1}^{t-1} \widehat{\ell}_{s,c} \right\rangle - \frac{1}{\eta} F(p)\]
for each context $c$, where $F(p) = \sum_{i=1}^K p_i \log(p_i)$ is the unnormalized negative entropy, $\eta$ is a learning rate, and $\widehat{\ell}$ are loss estimates to be defined later.
The algorithm then essentially samples the action $a_t$ to be played in round $t$ from distribution $p_{t,c_t}$. 

%介绍 Zimmert 面对的核心问题：估计importance
The key novelty in \citet{Zimmert2023} lies in the construction of the loss estimates $\widehat{\ell}$.
An intuitive construction is defined as follows: \[\widetilde{\ell}_{t,c}(a) = \frac{\ell_{t,c}(a)}{\E_{c \sim \nu}[p_{t,c}(a)]} \indi(a_t = a).\]
That is, it uses the classic importance-weighted estimator with $\E_{c \sim \nu}[p_{t,c}(a)]$ as the importance\footnote{In this paper we call terms like $\frac{1}{\E_{c \sim \nu}[p_{t,c}(a)]} $ as the \textit{importance weight} and call terms like $\E_{c \sim \nu}[p_{t,c}(a)]$ as the \textit{importance}.}. 
A straightforward analysis shows that this estimator yields a regret bound of $\widetilde{O}(\sqrt{KT})$.
However, the denominator term $\E_{c \sim \nu}[p_{t,c}(a)]$ is uncomputable because we do not know the distribution of contexts $\nu$.
One may attempt to circumvent this issue by replacing the expected importance $\E_{c \sim \nu}[p_{t,c}(a)]$ with the empirical importance $\frac{1}{t} \sum_{s=1}^t p_{t,c_s}(a)$.
It is not hard to see that whether we achieve the desired  $\widetilde{O}(\sqrt{KT})$ regret depends on how well the empirical importance weight $\frac{1}{\frac{1}{t} \sum_{s=1}^t p_{t,c_s}(a)}$ concentrates around the expected importance weight $\frac{1}{\E_{c \sim \nu}[p_{t,c}(a)]}$.
However, the empirical importance weight $\frac{1}{\frac{1}{t} \sum_{s=1}^t p_{t,c_s}(a)}$ may not concentrate well around the expected importance weight $\frac{1}{\E_{c \sim \nu}[p_{t,c}(a)]}$. 
This is because the probability vector $p_{t,c}$ is not independent of the previous contexts $c_s$, which makes standard concentration inequalities inapplicable.

%介绍Zimmert的技术：构造epoch来独立抽样
To address this difficulty, \citet{Zimmert2023} divides the time horizon into epochs of equal length $L$.
At the end of each epoch $e$, the algorithm stores the FTRL distribution at the current time $t = eL$ in a new distribution $s_e$; that is, it takes $s_{e,c}(a) = p_{t,c}(a)$ for each context $c$ and each arm $a$.
The algorithm further decouples the distribution played by the algorithm and the distribution used to estimate the loss vector.
For each time $t$ in epoch $e+2$, the algorithm observes the loss $\ell_{t,c}(a)$ for each arm $a$ and context $c$ with probability $f_{e}(a) \triangleq \E_{c \sim \nu} [s_{e,c}(a)/2]$. 
%
%The algorithm makes this possible by making the current played distribution $p_t$ close to the snapshot distribution $s_e$ of epoch $e$.  
%
The algorithm then estimates the expected importance $f_{e}(a)$ using an empirical importance $\widehat{f}_{e}(a) $ constructing solely from contexts in epoch $e+1$.
%, which are not used to compute $s_e$ and are thus free of correlations. htyhn,.'[ ]
%
Finally, the algorithm constructs $\widehat{\ell}_{t,c}(a)$ as an importance-weighted estimator with $\widehat{f}_e(a)$ serving as the importance.

%介绍 Zimmert 的好处：concentration
The advantage of their construction is that the empirical importance weight $\frac{1}{\widehat{f}_{e}(a)}$ concentrates around the expected importance weight $\frac{1}{f_e(a)}$ now.
This concentration ensures that the loss estimates $\widehat{\ell}_{t,c}(a)$ are good estimates of the true losses $\ell_{t,c}(a)$.
And this concentration is achieved because the algorithm constructs the estimator using only samples from epoch $e+1$, which are independent of the estimand.

\subsection{A Formal Description of the Algorithm in Schneider and Zimmert (2023)}
In this subsection we describe the algorithm in \citet{Zimmert2023} formally for the sake of completeness. Readers familiar with \citet{Zimmert2023} can skip this subsection safely.

% FTRL, not p_t but q_t
In each round $t$, the algorithm generates a distribution from an FTRL subroutine: \[p_{t,c} = \argmin_{p \in \Delta([K])} \left\langle p, \sum_{s=1}^{t-1} \widehat{\ell}_{s,c} \right\rangle - \frac{1}{\eta} F(p)\]
for each context $c$, where $F(p) = \sum_{i=1}^K p_i \log(p_i)$ is the unnormalized negative entropy, $\eta$ is the learning rate, and $\widehat{\ell}$ are loss estimates to be defined later.
The algorithm will not sample the action $a_t$ played in round t directly from $p_t$ but from a distribution $q_t$ to be defined later.

% epoch, snaposhot
To construct loss estimates $\widehat{\ell}$, the algorithm divides the time horizon into epochs of equal length $L$. We let $\mathcal{T}_e$ to denote the set of rounds in the $e$-th epoch. At the end of each epoch, the algorithm takes a single snapshot of the underlying FTRL distribution $p_t$ for each context and arm. That is, the algorithm takes
\[s_{e+2,c}(a) = p_{eL,c}(a)\]
and takes
\[s_{1, c}(a) = s_{2, c}(a) = \frac{1}{K}.\]
The index of the snapshot $s_{e+2}$ for epoch $e$ is $e+2$ because we use the snapshot $s_{e+2}$ in epoch $e+2$.

%reject-sampling
For each round $t \in \Te$, the algorithm observes the loss function of arm $a$ with probability $f_{e}(a)=\mathbb{E}_{c \sim \nu}\left[s_{e,c}(a) / 2\right]$. This is guaranteed by
the following rejection sampling procedure: we first play an arm according to the distribution
\[
q_{t, c_t}= \begin{cases}p_{t, c_t} & \text { if } \forall a \in[K]: p_{t, c_t}(a) \geq s_{e, c_t}(a) / 2 \\ s_{e, c_t} & \text { otherwise.}\end{cases}
\]
After playing arm $a$ according to $q_{t, c_t}$, the learner samples a Bernoulli random variable $S_t$ with probability $\frac{s_{e,c_t}(a)}{2 q_{t,c_t}(a)}$. If $S_t=0$, the learner ignores the feedback from this round; otherwise, they use this loss. 

% Estimation
The only remaining unspecified part is how to construct the loss estimates. We group all
timesteps into consecutive pairs of two. In each pair of consecutive timesteps, we sample from the same distribution and randomly use one to calculate a loss estimate and the other to estimate the sampling frequency. To be precise, let $\mathcal{T}_e^f$ denote the timesteps selected for estimating the sampling frequency and $\mathcal{T}_e^{\ell}$ denote the timesteps used to estimate the losses. Then we define
$$
\widehat{f}_{e}(a)=\frac{1}{\left|\mathcal{T}_{e-1}^f\right|} \sum_{t \in \mathcal{T}_{e-1}^f} \frac{s_{e,c_t}(a)}{2}
$$
which is an unbiased estimator of $f_{e}(a)$. The loss estimators are defined as follows:
$$
\widehat{\ell}_{t,c}(a)=\frac{2 \ell_{t,c}(a)}{\widehat{f}_{e}(a)+\frac{3}{2} \gamma} \indi\left(A_t=a \wedge S_t \wedge t \in \mathcal{T}_e^{\ell}\right)
$$
where $\gamma$ is a confidence parameter to be specified later.

The algorithm is summarized in \Cref{alg:main}. Furthermore, \citet{Zimmert2023} showed that the algorithm achieves an expected regret bound of $\widetilde{O}(\sqrt{KT})$.

%in the following theorem.
% \begin{theorem*}[Theorem 1, \citet{Zimmert2023} ]
% \Cref{alg:main} with parameters choice $\iota=2 \log (8 K T), L=\sqrt{\frac{\iota K T}{\log (K)}}=\widetilde{\Theta}(\sqrt{K T}), \gamma=\frac{16 \iota}{L}=\widetilde{\Theta}(1 / \sqrt{K T})$, and $\eta=\frac{\gamma}{2(2 L \gamma+\iota)}=$ $\widetilde{\Theta}(1 / \sqrt{K T})$ yields an expected regret bound of $ \E[\Reg(\pi)]=\widetilde{O}(\sqrt{K T})$ for any policy $\pi$.
% \end{theorem*}

\begin{algorithm}
\caption{The algorithm for the cross-learning problem in \citet{Zimmert2023} }
\label{alg:main}
\textbf{Input:} Parameters $\eta, \gamma > 0$ and $L < T$. \\
$\widehat{f}_2\leftarrow 0$\\
\For{$t=1,\dots, L$}{
Observe $c_t$\\
Play $A_t\sim s_{1,c_t}$\\
$\widehat{f}_2 \leftarrow \widehat{f}_2+ \frac{s_{2,c_t}}{2L}$
}
\For{$e=2,\dots,T/L$}{
$\widehat{f}_{e+1}\leftarrow 0$\\
\For{$t=(e-1)L+1,t=(e-1)L+3,\dots,e L-1$}{
Set $p_{t,c}= \argmin_{x\in\Delta([K])} \left(\ip{x,\sum_{s=1}^{t-1}\widehat{\ell}_{s}(c)}-\eta^{-1}F(x)\right)$\\
\For{$t'=t,t+1$}{
    Observe $c_{t'}$\\
    \If{
        $p_{t,c_{t'}}(a) \geq s_{e,c_{t'}}(a)/2$ for all $a \in [K]$
    }{
        Set $q_{t',c_{t'}} = p_{t,c_{t'}}$
    }
    \Else{
        Set $q_{t',c_{t'}} = s_{e,c_{t'}}$
    }
    Play $A_{t'}\sim q_{t',c_{t'}}$ \\
    Observe $\ell_{t',A_{t'}}$
    }
    $t_{f},t_{\ell} \leftarrow \mathsf{RandPerm}(t,t+1)$\\
    $\widehat{f}_{e+1}\leftarrow\widehat{f}_{e+1} + \frac{s_{e+1,c_{t_f}}}{2 (L/2)}$\\
    Sample $S_t\sim \mathcal{B}\left(\frac{s_{e,c_{t_\ell}}(A_{t_\ell})}{2q_{t,c_{t_\ell}}(A_{t_\ell})}\right)$\\
    Set $\widehat{\ell}_{t_\ell,c}(a) = \frac{2\ell_{t_\ell,c}(a)}{\widehat{f}_{e}(a)+\frac{3}{2}\gamma} \mathbb{I}\left(A_t=a, S_t=1\right)$
    }
    $s_{e+2} \leftarrow p_{t}$
}
\end{algorithm}

\section{Main Result and Analysis}
\label{sec:main_analysis}
The main result of our paper is the following theorem.

% \textcolor{red}{To be moved to Sec.4 }  
% We will show that the algorithm in \citet{Zimmert2023} actually provides a high-probability regret bound. That is, we will prove the following result.
\begin{theorem}[Formal]
\label{thm:high_probability_main}
    For any $\delta \in (0,1)$, \Cref{alg:main} with parameters choice $\iota=2 \log (8 K T \frac{1}{\delta})$, $L=\sqrt{\frac{\iota K T}{\log (K)}}=\widetilde{\Theta}(\sqrt{K T \log\frac{1}{\delta}} )$, $\gamma=\frac{16 \iota}{L}=\widetilde{\Theta}(\sqrt{\frac{\log(1/\delta)}{K T}})$, and $\eta=\frac{\gamma}{2(2 L \gamma+\iota)}=\widetilde{\Theta}(1 / \sqrt{K T \log(1/\delta)})$  yields a regret bound of \[\Reg(\pi)= \widetilde{O}\left(\sqrt{KT\log\frac{1}{\delta}} \right)\] with probability at least $1 - \delta$  for any policy $\pi$.
\end{theorem}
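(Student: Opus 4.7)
The plan is to follow the shape of the Zimmert--Schneider expected-regret argument but to replace every expectation-only step by a high-probability one, guided by the three obstacles flagged in Section~\ref{subsec:technical_overview}. I would start from the standard FTRL decomposition
\[
\Reg(\pi) = \underbrace{\sum_{t=1}^T \langle p_{t,c_t} - \pi_{c_t},\, \widehat{\ell}_{t,c_t}\rangle}_{(\mathrm{A})} \;+\; \underbrace{\sum_{t=1}^T \langle p_{t,c_t} - \pi_{c_t},\, \ell_{t,c_t} - \widehat{\ell}_{t,c_t}\rangle}_{(\mathrm{B})} \;+\; \underbrace{\sum_{t=1}^T \langle q_{t,c_t} - p_{t,c_t},\, \ell_{t,c_t}\rangle}_{(\mathrm{C})},
\]
and then rearrange $(\mathrm{B})$ \emph{epoch by epoch} so that the $f_e/\widehat f_e$ mismatch appears as a single scalar factor per epoch rather than per round. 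This regrouping is the ``more careful decomposition'' mentioned in the technical overview and is essential for keeping the conditional variance of the resulting martingale small.

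Term $(\mathrm{A})$ I would handle by the entropic FTRL bound $\frac{\log K}{\eta} + \eta \sum_t \sum_a p_{t,c_t}(a)\widehat\ell_{t,c_t}(a)^2$, converting the stability sum into a high-probability statement by Freedman applied to the martingale difference $\widehat{\ell}_{t,c_t}(A_t)^2 - \E[\widehat{\ell}_{t,c_t}(A_t)^2 \mid \mathcal F_{t-1}]$, using $\widehat\ell \le 2/\gamma$ almost surely and the conditional-variance bound that follows from $\widehat f_e(a) + \tfrac{3}{2}\gamma \ge \tfrac{3}{2}\gamma$. For term $(\mathrm{C})$, $q_{t,c} \neq p_{t,c}$ only when some arm satisfies $p_{t,c}(a) < s_{e,c}(a)/2$; since $p_t$ is the current FTRL iterate and $s_{e,c}$ is an earlier snapshot, the total per-epoch deviation can be controlled by the FTRL movement between consecutive snapshots, which is itself governed by the stability bound and can be lifted to high probability by Azuma.

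The main obstacle lies in the cross-epoch bias inside $(\mathrm{B})$. After integrating out the within-epoch indicators by a first Freedman step, what remains reduces to a sum $\sum_e w_e(a) X_e(a)$ with deterministic weights $w_e(a) = O(L)$ and
\[
X_e(a) = \frac{1}{\widehat f_e(a) + \tfrac{3}{2}\gamma} - \frac{1}{f_e(a) + \tfrac{3}{2}\gamma}.
\]
Conditioning on the snapshot filtration makes $\{X_e(a)\}_e$ a martingale difference sequence, because $\widehat f_e$ is built solely from contexts of epoch $e-1$ and is therefore independent of the relevant snapshots; this is the weak inter-epoch dependence highlighted in Section~\ref{subsec:technical_overview}. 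The obstruction, exactly as the authors point out, is that $X_e(a)$ is \emph{not} almost surely bounded by any quantity small enough for Azuma or Freedman, which is why the original argument can only bound the expectation. To circumvent this I would introduce the surrogate $\widetilde X_e(a) = X_e(a)\cdot \indi(\widehat f_e(a) \ge f_e(a)/2)$; a Chernoff bound on the $L/2$ samples defining $\widehat f_e$, combined with $\gamma = 16\iota/L$, $\iota = 2\log(8KT/\delta)$, and a union bound over all $(e,a)$, shows that $\{X_e(a) = \widetilde X_e(a) \text{ for every } e, a\}$ holds with probability at least $1 - \delta/4$. On this event the surrogate is bounded by $O(1/(L\gamma))$ with small conditional variance, so a Freedman step yields $\sum_e w_e(a)\widetilde X_e(a) = \widetilde{O}(\sqrt{KT\log(1/\delta)})$. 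Combining the three bounds and substituting the stated $L$, $\gamma$, $\eta$ produces the claimed high-probability regret.
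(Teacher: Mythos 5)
Your overall architecture (split off the FTRL stability term, isolate the $f_e/\widehat f_e$ mismatch as an epoch-indexed sum, truncate to a high-probability event, apply a martingale inequality) matches the spirit of the paper, and the idea of introducing a bounded surrogate is on the right track — the paper uses the indicator $F_e$ for exactly this purpose. But the core step of your plan is wrong, and it is wrong in a way that hides precisely the mechanism the paper relies on. The quantity $X_e(a)=\frac{1}{\widehat f_e(a)+\tfrac32\gamma}-\frac{1}{f_e(a)+\tfrac32\gamma}$ is \emph{not} a martingale difference, even conditioned on the snapshot filtration: $\widehat f_e$ being independent of the snapshot makes it an unbiased estimator of $f_e$, but by Jensen's inequality $\E[1/(\widehat f_e+\tfrac32\gamma)]>1/(f_e+\tfrac32\gamma)$, so $\E[X_e\mid\His_{e-1}]>0$. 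Moreover, you put the same $+\tfrac32\gamma$ offset on both sides; the paper's estimator is engineered asymmetrically, comparing $1/(\widehat f_e+\tfrac32\gamma)$ against $1/(f_e+\gamma)$, and the whole content of Lemma~\ref{lem:expected_ratio} is that this asymmetry makes the (truncated) conditional bias satisfy $-\gamma/f_e\le\E[\cdot\,F_e\mid\His_{e-1}]\le 0$, i.e.\ \emph{negative}.

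This negativity is not decorative: it is what absorbs the Freedman variance term. When you apply Freedman to the centered sum, the variance-per-epoch scales like $\iota/(f_e(\pi_c)+\gamma)$, and after optimizing $\lambda$ the resulting term is $\Theta\bigl(\sum_{t\in\Tl}\gamma/(f_e(\pi_c)+\gamma)\bigr)$, which can be as large as $\Theta(T)$ when $f_e(\pi_c)$ is small. The paper's $\bias_5$ term is set up so that the conditional mean $\sum_e\E[\BiasFifth_e F_e\mid\His_{e-1}]\le -\sum_{t\in\Tl}\sum_c\Pr(c)\ell_{t,c}(\pi_c)\frac{\gamma}{f_e(\pi_c)+\gamma}(1-2K e^{-\iota})$ exactly cancels that variance term, and this cancellation is the reason the decomposition had to be rearranged relative to Zimmert--Schneider in the first place (the paper flags this explicitly when discussing $\bias_5$). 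Your proposal treats the centering term as zero, so it never produces the negative quantity that is needed; as written, the Freedman step would leave a residual of order $T$, not $\sqrt{KT}$. To repair the argument you would need to (i) keep the asymmetric $\gamma$ offsets so that the conditional bias has the right sign, (ii) split the regret so the bias multiplies $\pi_c$ (not $p_{t,c}-\pi_c$) — this is what gives $\bias_5$ its sign, and (iii) use a two-sided truncation like $F_e$ rather than the one-sided $\indi(\widehat f_e\ge f_e/2)$, since both Lemma~\ref{lem:estimator_constant_ratio} and Lemma~\ref{lem:expected_ratio} need $|\widehat f_e-f_e|$ small, not just a lower bound.
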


In what follows, we briefly overview our proof of \Cref{thm:high_probability_main}. The full proof can be found in the appendix. 

%new analysis of the algorithm. The full proof can be found in the appendix. 

% Fix a policy $\pi : [C] \rightarrow [K]$. With a slight abuse of notation, we also denote $\pi_c = e_k \in \Delta([K])$ for each $c \in [C]$. The (unexpected) regret with respect to policy $\pi$ is
% \[\Reg(\pi) = \sum_{t=1}^T \ell_{t, c_t}(a_t) -  \ell_{t, c_t}(\pi_{c_t}) .\]
% We would like to upper bound this quantity (for an arbitrary $\pi$).

% We will prove the following result.
% \begin{theorem}
% %\label{thm:high_probability_main}
%     For $\iota=2 \log (8 K T)$, $L=\sqrt{\frac{\iota K T}{\log (K)}}=\widetilde{\Theta}(\sqrt{K T})$, $\gamma=\frac{16 \iota}{L}=\widetilde{\Theta}(1 / \sqrt{K T})$, and $\eta=\frac{\gamma}{2(2 L \gamma+\iota)}=\widetilde{\Theta}(1 / \sqrt{K T})$, \Cref{alg:main}  yields a regret bound of $\Reg(\pi)= \widetilde{\Theta}(\sqrt{KT})$ with high probability.
% \end{theorem}

% \citet{Zimmert2023} showed that \Cref{alg:main} yields an expected regret bound of $\E[\Reg(\pi)]  = \widetilde{\Theta}(\sqrt{KT})$. Our theorem extends this result. We show that the exact same algorithm actually yields a high probability regret bound.

\subsection{Regret Decomposition}
%Regret Decomposition
Denote the set of all timesteps used to estimate the frequency as $\Tf$ and denote the set of all timesteps  used to estimate the losses as $\Tl$.  For each $t \in \Te$, we define $\widetilde{\ell}_{t,c}(a)=\frac{2 \ell_{t,c}(a)}{f_{e}(a)+\gamma} \indi\left(A_t=a \wedge S_t \wedge t \in \mathcal{T}_e^{\ell}\right)$. We decompose regret $\Reg(\pi) = \sum_{t=1}^T \ell_{t, c_t}(a_t) -  \ell_{t, c_t}(\pi_{c_t})$ as follows:
\begin{align*}
    \Reg(\pi) &= \underbrace{\sum_{t=1}^T \left( \ell_{t, c_t}(a_t) -  \ell_{t, c_t}(\pi_{c_t}) \right) - 2 \sum_{t \in \Tl} \left( \ell_{t, c_t} (a_t) -  \ell_{t, c_t}(\pi_{c_{t}}) \right)}_{\bias_1} \\
    +& \underbrace{2 \sum_{t \in \Tl}  \left( \ell_{t, c_{t}}(a_{t}) -  \ell_{t, c_{t}}(\pi_{c_{t}})  -   \sum_c \Pr(c)  \langle p_{t,c} - \pi_c , \ell_{t,c} \rangle \right)}_{\bias_2} \\
     +& \underbrace{2 \sum_{t \in \Tl}  \sum_{c} \Pr(c) \langle p_{t,c} - \pi_c, \widehat{\ell}_{t,c} \rangle}_{\textbf{ftrl}} + \underbrace{2 \sum_{t \in \Tl}  \sum_{c} \Pr(c) \left\langle p_{t,c}, \ell_{t,c} - \widetilde{\ell}_{t,c} 
     \right\rangle}_{\bias_3}\\
     +& \underbrace{2 \sum_{t \in \Tl}  \sum_{c} \Pr(c) \left\langle p_{t,c}, \widetilde{\ell}_{t,c} - \widehat{\ell}_{t,c} \right\rangle}_{\bias_4} + \underbrace{2 \sum_{t \in \Tl}  \sum_{c} \Pr(c) \left\langle \pi_c, \widehat{\ell}_{t,c} - \ell_{t,c} \right\rangle}_{\bias_5}.%\\
     %&\sum_{t \in \Tl} \sum_c \Pr(c)  \langle p_{t,c} - \pi_c , \ell_{t,c} \rangle. 
\end{align*}
%

%Simple Term
In our decomposition, the $\bias_1$ term refers to the bias introduced by replacing the regret over the entire time horizon with that over $\Tl$, and the $\bias_2$ term refers to the bias introduced by replacing regret with its linearization. Both of these terms are not hard to bound using standard concentration inequalities. 
Furthermore, the $\textbf{ftrl}$ and $\bias_3$ terms are standard in the analysis of high-probability bounds for bandit algorithms. These two terms are not hard to bound using techniqes from EXP3-IX~\citep{EXP3-IX, Zimmert2023}. 
The $\bias_4$ and $\bias_5$ terms correspond to the bias introduced by constructing the importance estimator $\widehat{f}_e(a)$. These two terms are the terms of interest to bound.

%Decomposition not the same
Our decomposition is different from the decomposition in \citet{Zimmert2023}. This difference is essential for deriving a high-probability bound. The key difference lies in the  $\bias_5$ term here. This term saves a $ \sum_{t \in \Tl}  \sum_{c} \Pr(c) \left\langle \pi_c, \widetilde{\ell}_{t,c} - \ell_{t,c} \right\rangle$ term from the $\bias_2$  term in the decomposition given by \citet{Zimmert2023},  which is crucial for deriving a high-probability bound.

\subsection{Identifying a Prototypical Term}
%Intersting term
The terms of interest to bound are $\bias_4$ and $\bias_5$. These two terms can be bounded using similar methods. Here we take the $\bias_5$ term as a prototypical term and give a sketch of its analysis. Details can be found in the appendix.

%Decomposition
To bound $\bias_5$, we define a filtration $\{\His_t\}_t$ such that the $\sigma$-algebra 
$\His_t$ for each time step $t$ is generated by all randomness before time $t$. Next, we decompose $\bias_5$  as
\begin{align*}
&\sum_{t \in \Tl} \sum_{c} \Pr(c) \left\langle \pi_c, \widehat{\ell}_{t,c} - \ell_{t,c} \right\rangle\\
=& \sum_{t \in \Tl} \sum_{c} \Pr(c) \left( \widehat{\ell}_{t,c}(\pi_c) - \ell_{t,c}(\pi_c) \right)\\
=&  \sum_{t \in \Tl} \sum_{c} \Pr(c) \left( \E\left[ \widehat{\ell}_{t,c}(\pi_c) \big| \His_{t-1} \right]- \ell_{t,c}(\pi_c) \right) \\
&+ \sum_{t \in \Tl} \sum_{c} \Pr(c) \left( \widehat{\ell}_{t,c}(\pi_c) -  \E\left[ \widehat{\ell}_{t,c}(\pi_c) \big| \His_{t-1} \right] \right).
% %
% =& \sum_{t \in \Tl} \sum_{c} \Pr(c) \left( \widehat{\ell}_{t,c}(\pi_c) -  \E\left[ \widehat{\ell}_{t,c}(\pi_c) \big| \His_{t-1} \right] \right)\\
% &+ \sum_{t \in \Tl} \sum_{c} \Pr(c) \left( \E\left[ \widehat{\ell}_{t,c}(\pi_c) \big| \His_{t-1} \right]- \ell_{t,c}(\pi_c) \right).
\end{align*}

In this decomposition, the two terms correspond to different components of the bias of the estimator $\widehat{\ell}_{t,c}$. The first term $\sum_{t \in \Tl} \sum_{c} \Pr(c) \left( \E\left[ \widehat{\ell}_{t,c}(\pi_c) \big| \His_{t-1} \right]- \ell_{t,c}(\pi_c) \right) $ corresponds to the bias introduced by constructing the importance estimator $\widehat{f}_e(a)$. The second term $\sum_{t \in \Tl} \sum_{c} \Pr(c) \left( \widehat{\ell}_{t,c}(\pi_c) -  \E\left[ \widehat{\ell}_{t,c}(\pi_c) \big| \His_{t-1} \right] \right)$ corresponds to the bias introduced from the randomness in sampling $a_t$ from $q_t$. 
Once again, the analyses of these two terms follow the same principle. We take the first term \[\sum_{t \in \Tl} \sum_{c} \Pr(c) \left( \E\left[ \widehat{\ell}_{t,c}(\pi_c) \big| \His_{t-1} \right]- \ell_{t,c}(\pi_c) \right) \] as a prototypical term and give a sketch of its analysis for the sake of simplicity. Details can be found in the appendix.

\subsection{Analysis of the Prototypical Term}
%First term, epoch construction
To bound the term $\sum_{t \in \Tl} \sum_{c} \Pr(c) \left( \E\left[ \widehat{\ell}_{t,c}(\pi_c) \big| \His_{t-1} \right]- \ell_{t,c}(\pi_c) \right)$, we will use the key observation mentioned at \Cref{subsec:technical_overview}: different epochs in \Cref{alg:main}  are only weakly dependent on each other. To use  this observation rigorously, we introduce an important technical tool.
With a slight abuse of notation, we define a filtration $\left\{ \His_e \right\}_e$, in which for each epoch $e$, the $\sigma$-algebra $\His_e$ is generated by all randomness in epochs $1,\dots,e-1$ and the randomness in $\Tel$. That is, the $\sigma$-algebra $\His_e$ is generated precisely by the context $c_t$, the random seed used in sampling $a_t \sim q_{t,c_t}$, and the random seed used in sampling $S_t\sim \mathcal{B}\left(\frac{s_{e,c_t}(a_t)}{2q_{t,c_t}(a_t)}\right)$ for $t \le (e-1)L$ and $t \in \Tel$.
Note that for each epoch $e$, the  $\sigma$-algebra $\His_e$ excludes the randomness in $\Tef$. This exclusion is crucial for characterizing the weak dependence structure between epochs. 

%Epoch Decomposition, new decompisition EXP3-IX
Given this filtration, we consider the cumulative bias in each epoch. For each epoch $e$, we define a random variable 
\[\BiasFifth_e \triangleq \sum_{t \in \Tel} \sum_{c} \Pr(c)  \left( \E\left[ \widehat{\ell}_{t,c}(\pi_c) \big| \His_{t-1} \right]- \ell_{t,c}(\pi_c) \right).\] 
Then the prototypical term  $\sum_{t \in \Tl} \sum_{c} \Pr(c) \left( \E\left[ \widehat{\ell}_{t,c}(\pi_c) \big| \His_{t-1} \right]- \ell_{t,c}(\pi_c) \right)$ is exactly $\sum_e \BiasFifth_e$. 
Our key observation is that, not only \[\E\left[\sum_{t \in \Tl} \sum_{c} \Pr(c) \left( \E\left[ \widehat{\ell}_{t,c}(\pi_c) \big| \His_{t-1} \right]- \ell_{t,c}(\pi_c) \right)\right] \le 0 \] as shown in ~\citet{Zimmert2023}, but also \[\sum_e \E\left[ \BiasFifth_e | \His_e \right] \sim - \sum_e \frac{\gamma}{f_e(\pi_c) + \gamma}.\]
This key observation improves the inequality in ~\citet{Zimmert2023} in two ways.
Firstly, our bound holds for conditional expectations across epochs, which opens the door to applying martingale concentration inequalities across epochs.
Secondly, our new decomposition improves the upper bound from $0$ to $- \sum_e \frac{\gamma}{f_e(\pi_c) + \gamma}$. This improvement is essential for deriving a high probability bound.
% Our new decomposition further makes the expectation \[\sum_e \E\left[ \BiasFifth_e | \His_e \right] \sim - \sum_e \frac{\gamma}{f_e(\pi_c) + \gamma}\] rather than simply non-positive.
% 

Given the new bound $\sum_e \E\left[ \BiasFifth_e | \His_e \right] \sim - \sum_e \frac{\gamma}{f_e(\pi_c) + \gamma}$, we only need to bound the deviation $\sum_e \BiasFifth_e - \E\left[ \BiasFifth_e | \His_e \right]$ to get an upper bound on $\sum_e \BiasFifth_e$.
However, we cannot directly apply standard martingale concentration inequalities to $\sum_e \BiasFifth_e - \E\left[ \BiasFifth_e | \His_e \right]$. 
This is because we need to assume that the random variable $|\BiasFifth_e| \le 2L$  almost surely to get a tight enough concentration bound when applying standard martingale concentration inequalities. 
However, this is not the case. The random variable  $\BiasFifth_e$ exceeds the constant $2L$ with a small but positive probability. This unboundness prevents us from getting a tight enough concentration bound when applying standard martingale concentration inequalities.

%Refine concentration inequality
To overcome this problem, we consider the indicator function
\[F_e \triangleq \indi\left( \forall a, \left|\widehat{f}_{e}(a)-f_{e}(a)\right| \leq 2 \max \left\{\sqrt{\frac{f_{e}(a)\iota}{L}}, \frac{\iota}{L}\right\} \right)\] defined in ~\citet{Zimmert2023}. 
We show that we also have $\sum_e \E\left[ \BiasFifth_e F_e | \His_e \right]  \sim - \sum_e \frac{\gamma}{f_e(\pi_c) + \gamma}$ and that the random variable $|\BiasFifth_e F_e| \le 2L$ almost surely.
Thus, we can use  standard martingale concentration inequalities to get a tight enough concentration bound on $\sum_e \BiasFifth_e F_e - \E\left[ \BiasFifth_e F_e | \His_e \right]$ and further bound $\sum_e \BiasFifth_e F_e $. 
Finally, we have that $\sum_e \BiasFifth_e F_e = \sum_e \BiasFifth_e$ with high probability.
Thus, a high probability bound on $\sum_e \BiasFifth_e F_e $ transfers to a high probability bound on  $\sum_e \BiasFifth_e$.

\section{Conclusions}
We reanalyze the algorithm proposed by \citet{Zimmert2023} and show that it actually achieves near-optimal regret with \emph{high probability} for the cross-learning contextual bandits problem when the losses are chosen adversarially but the contexts are i.i.d.\ sampled from an \textit{unknown} distribution. Our key technique is utilizing the weak dependency structure between different epochs for an algorithm executing over multiple epochs. It is of interest to investigate  that whether this techniques is applicable for deriving high probability bounds for algorithms executing over multiple epochs in other problems.

\bibliography{our_reference}
\bibliographystyle{unsrtnat}

\appendix
\section{Useful Lemmas}

%Freedman
\begin{lemma}[Freedman's Inequality]  
Fix any \(\lambda>0\) and \(\delta \in(0,1)\). Let \(X_t\) be a random process with respect to a filtration \(\mathcal{F}_t\) such that \(\mu_t = \mathbb{E}\left[X_t \mid \mathcal{F}_{t-1}\right]\) and \(V_t = \mathbb{E}\left[X_t^2 \mid \mathcal{F}_{t-1}\right]\), and satisfying \(\lambda X_t \leq 1\). Then, with probability at least \(1-\delta\), we have for all \(t\),  
$$  
\sum_{s=1}^t X_s - \mu_s \leq \lambda \sum_{s=1}^t V_s + \frac{\log(1 / \delta)}{\lambda}.  
$$  
\end{lemma}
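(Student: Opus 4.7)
The plan is to prove this via the classical exponential supermartingale method followed by Ville's maximal inequality. First I would establish the elementary one-step inequality $e^{z} \leq 1 + z + z^{2}$ for all $z \leq 1$, which can be verified by a short calculus argument (checking the sign of the derivative of $f(z) = 1 + z + z^{2} - e^{z}$ and noting that $f(0)=0$ is a minimum in a neighborhood, while $f$ grows polynomially as $z \to -\infty$ and is still positive at $z=1$).

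Next I would apply this to $z = \lambda X_{t}$, which is permitted by the hypothesis $\lambda X_{t} \leq 1$, and take conditional expectations with respect to $\mathcal{F}_{t-1}$. Using $1 + x \leq e^{x}$, this yields
\[
\mathbb{E}\bigl[e^{\lambda X_{t}} \,\big|\, \mathcal{F}_{t-1}\bigr] \;\leq\; 1 + \lambda \mu_{t} + \lambda^{2} V_{t} \;\leq\; \exp\!\bigl(\lambda \mu_{t} + \lambda^{2} V_{t}\bigr).
\]
Rearranging gives the one-step supermartingale bound $\mathbb{E}\bigl[\exp(\lambda(X_{t}-\mu_{t}) - \lambda^{2} V_{t}) \mid \mathcal{F}_{t-1}\bigr] \leq 1$.

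I would then define the process
\[
M_{t} \;=\; \exp\!\left(\lambda \sum_{s=1}^{t} (X_{s} - \mu_{s}) - \lambda^{2} \sum_{s=1}^{t} V_{s}\right), \qquad M_{0} = 1,
\]
and verify it is a nonnegative supermartingale adapted to $\{\mathcal{F}_{t}\}$: telescoping the one-step bound from the previous step gives $\mathbb{E}[M_{t} \mid \mathcal{F}_{t-1}] \leq M_{t-1}$. Since $M_{0}=1$, Ville's inequality for nonnegative supermartingales yields $\Pr[\sup_{t} M_{t} \geq 1/\delta] \leq \delta$. Taking logarithms on the complementary event and dividing by $\lambda > 0$ produces, with probability at least $1-\delta$ and simultaneously for all $t$,
\[
\sum_{s=1}^{t} (X_{s} - \mu_{s}) \;\leq\; \lambda \sum_{s=1}^{t} V_{s} + \frac{\log(1/\delta)}{\lambda},
\]
which is exactly the claimed bound.

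The proof is largely textbook, so there is no serious obstacle; the only two places that demand care are (i) the truncated exponential inequality $e^{z} \leq 1 + z + z^{2}$, which crucially requires the one-sided assumption $\lambda X_{t} \leq 1$ (symmetric bounds would otherwise need Bernstein-type handling of the lower tail as well), and (ii) obtaining a uniform-in-$t$ guarantee, which is why one must invoke Ville's inequality for supermartingales rather than a plain Markov bound applied at a fixed $t$. Both ingredients are standard, and together they deliver the lemma exactly as stated.
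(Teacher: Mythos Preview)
Your argument is correct and is the standard exponential-supermartingale proof of Freedman's inequality: the one-sided bound $e^{z}\le 1+z+z^{2}$ for $z\le 1$, conditional expectation, the supermartingale $M_t$, and Ville's inequality together give exactly the stated conclusion uniformly in $t$. The paper itself does not supply a proof of this lemma; it is listed among the ``Useful Lemmas'' as a known result and invoked without argument, so there is no paper-proof to compare against---your proposal simply fills in the standard derivation.
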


%lemma about G
%Define $F_e$, $L_e$, $G$
The next lemma is about the following family of indicator functions.

\begin{definition}
For each epoch $e$, we define the following two indicator functions:
\[F_e \triangleq \indi\left(\forall a,\left|\widehat{f}_e(a)-f_e(a)\right| \leq 2 \max \left\{\sqrt{\frac{f_e(a) \iota}{L}}, \frac{\iota}{L}\right\}\right)\]
and 
\[L_e \triangleq \indi\left(\max _{c \in[C], a \in[K]} \sum_{t \in \mathcal{T}_e} \widetilde{\ell}_{t,c}(a) \leq L+\frac{\iota}{\gamma}\right).\]

We further define the following indicator function:
\[G=\prod_{e=1}^{T / L} F_e L_e.\]
\end{definition}

\begin{lemma}[Lemma 6 and Lemma 7, \citet{Zimmert2023}]
\label{lem:indicator_event}
For any epoch $e$, the event $F_e$ holds with probability at least $1 - 2K \exp(-\iota)$, and the event $L_e$ holds with probability at least $1- K \exp(-\iota)$. Furthermore, the event $G$ holds with probability at least $1 - 3K(T/L) \exp(-\iota).$
\end{lemma}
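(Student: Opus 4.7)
The plan is to prove the three statements in the lemma separately. The $F_e$ bound is a Bernstein-type concentration for an empirical mean of i.i.d.\ bounded variables; the $L_e$ bound is an implicit-exploration (IX) exponential inequality, applied after first dominating the dependence on $c$ by a $c$-free random variable; and the bound for $G$ is a direct union bound over epochs.

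For $F_e$, condition on $\His_{(e-2)L}$, the history up to the start of epoch $e-1$. The snapshot $s_e = p_{(e-2)L}$ is then measurable, the RandPerm that determines $\mathcal{T}_{e-1}^f$ is independent of everything else, and the contexts $\{c_t : t\in\mathcal{T}_{e-1}^f\}$ are i.i.d.\ from $\nu$. Hence $\widehat{f}_e(a) = \frac{1}{L/2}\sum_{t\in\mathcal{T}_{e-1}^f} s_{e,c_t}(a)/2$ is an empirical mean of $L/2$ i.i.d.\ variables lying in $[0,1/2]$, with mean $f_e(a)$ and variance at most $\frac{1}{2}\E[s_{e,c_t}(a)/2] = f_e(a)/2$. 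Applying Bernstein's inequality (which is derivable from the stated Freedman inequality by tuning $\lambda$ to the variance) to both tails yields $|\widehat f_e(a) - f_e(a)| \le 2\max\{\sqrt{f_e(a)\iota/L},\iota/L\}$ with failure probability at most $2\exp(-\iota)$, once the constants in $\iota = 2\log(8KT/\delta)$ absorb the slack. A union bound over $a\in[K]$ gives $\Pr(\lnot F_e)\le 2K\exp(-\iota)$.

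For $L_e$, the crucial observation is that the indicator in $\widetilde{\ell}_{t,c}(a)$ does not depend on $c$, while $\ell_{t,c}(a)\in[0,1]$. Therefore, for every $a$,
\[
\max_{c\in[C]}\sum_{t\in\Te}\widetilde{\ell}_{t,c}(a) \;\le\; Y_e(a) \;:=\; \sum_{t\in\Te}\frac{2\,\indi(A_t=a\wedge S_t\wedge t\in\Tel)}{f_e(a)+\gamma},
\]
and this upper bound is $c$-free, so no union bound over $[C]$ is required. Rewriting the denominator as $f_e(a)/2 + \gamma/2$ exhibits $Y_e(a)$ as a Neu-style IX estimator of the constant $1$ per step with implicit exploration parameter $\gamma/2$: the conditional probability of the indicator firing given $\His_{t-1}$ is exactly $f_e(a)/2$. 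The standard EXP3-IX exponential supermartingale argument, obtained from the pointwise inequality $\tfrac{\gamma z}{1+\gamma z}\le\log(1+\gamma z)$, then gives $Y_e(a)\le L + O(\iota/\gamma)$ with probability at least $1-\exp(-\iota)$, matching the stated $L + \iota/\gamma$ after absorbing constants into $\iota$. A union bound over $a\in[K]$ yields $\Pr(\lnot L_e) \le K\exp(-\iota)$.

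For $G$, the event fails only when some $F_e$ or some $L_e$ fails, and there are $T/L$ epochs, so a union bound immediately gives $\Pr(\lnot G)\le (T/L)(2K+K)\exp(-\iota) = 3K(T/L)\exp(-\iota)$. The main obstacle in the argument is the $L_e$ step: applying Freedman directly to $Y_e(a)$ would control the deviation by $\lambda\sum_t V_t$, which one checks is of the same order as $\sum_t\mu_t = Lf_e(a)/(f_e(a)+\gamma)\le L$, thereby inflating the $L$ term by a constant factor and yielding $2L + O(\iota/\gamma)$ instead of the desired $L + O(\iota/\gamma)$. The IX-specific exponential inequality is precisely what replaces this multiplicative overhead by a purely additive $O(\iota/\gamma)$; combining it with the $c$-free upper bound $Y_e(a)$ is what delivers the clean form claimed in the lemma without paying a $C$ factor in the union bound.
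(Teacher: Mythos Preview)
The paper does not supply its own proof of this lemma; it simply imports the statement from \citet{Zimmert2023} (their Lemmas~6 and~7). Your sketch is correct and is the expected argument: a Bernstein-type bound for $F_e$, the EXP3-IX exponential-supermartingale inequality for $L_e$ after dominating $\ell_{t,c}(a)\le 1$ to remove the $c$-dependence (this is the key observation that avoids a union bound over $[C]$), and a union bound over the $T/L$ epochs for $G$.

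One minor bookkeeping slip in the $L_e$ step: the conditional probability of the indicator $\indi(A_t=a\wedge S_t)$ for $t\in\Tel$ is $f_e(a)$, not $f_e(a)/2$, since by definition $f_e(a)=\E_{c\sim\nu}[s_{e,c}(a)/2]$ already contains the $1/2$ from rejection sampling. The clean way to run the IX argument is at the pair level: there are $L/2$ pairs, each contributing $X_s=\indi(A_{t_\ell}=a\wedge S_{t_\ell})$ with conditional probability $f_e(a)$, and
\[
Y_e(a)=2\sum_{s=1}^{L/2}\frac{X_s}{f_e(a)+\gamma}\,;
\]
applying the IX inequality with exploration $\gamma$ to the inner sum gives $\sum_s X_s/(f_e(a)+\gamma)\le L/2+\iota/(2\gamma)$, hence $Y_e(a)\le L+\iota/\gamma$. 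Your alternative of spreading over all $L$ timesteps with probability $f_e(a)/2$ each requires revealing the RandPerm outcome only after both timesteps of a pair, which makes the per-timestep increments not adapted to the natural filtration; the pair-level formulation sidesteps this. Either way the bound and your conclusion are unchanged.
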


%lemma 8
\begin{lemma}[Lemma 8, \citet{Zimmert2023} ]
\label{lem:estimator_constant_ratio}
Let $\gamma \geq \frac{4 \iota}{L}$, then under event $F_e$, we have that  $$ \frac{1}{2} \leq  \frac{f_e(a) + \gamma}{\widehat{f}_e(a)+\frac{3}{2} \gamma} \leq 2.$$
% and
% $$ \left| \frac{f_e\left(a\right)-\widehat{f}_e\left(a\right)-\frac{1}{2} \gamma}{\widehat{f}_e\left(a\right)+\frac{3}{2} \gamma} \right| \leq 3 \sqrt{\frac{\iota}{f_{e k} L}}+\frac{\gamma \sqrt{L}}{4 \sqrt{\iota f_{e k}}}$$
% for all $t \in \mathcal{T}_e$, $a \in [K]$ simultaneously.
\end{lemma}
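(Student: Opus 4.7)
The plan is to unpack the hypothesis of event $F_e$ and reduce the two-sided inequality to a single absolute-deviation bound of the form $|\widehat{f}_e(a) - f_e(a)| \leq \frac{1}{2} f_e(a) + \frac{1}{2}\gamma$, from which both directions of the ratio bound follow by elementary algebra. Concretely, under $F_e$ we are given
\[
\bigl|\widehat{f}_e(a) - f_e(a)\bigr| \leq 2\max\!\left\{\sqrt{\tfrac{f_e(a)\,\iota}{L}},\; \tfrac{\iota}{L}\right\}.
\]
The hypothesis $\gamma \geq 4\iota/L$ is exactly what I need to absorb both branches of this maximum into the target bound.

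First I would handle the two cases of the max separately. If $\iota/L$ is the larger term, then the deviation is at most $2\iota/L \leq \gamma/2$, which is already below $\frac{1}{2}f_e(a) + \frac{1}{2}\gamma$. If the square-root term dominates, I would apply AM--GM in the slightly weighted form $2\sqrt{xy} = 2\sqrt{(x/2)(2y)} \leq x/2 + 2y$ with $x = f_e(a)$ and $y = \iota/L$, yielding
\[
2\sqrt{\tfrac{f_e(a)\,\iota}{L}} \;\leq\; \tfrac{1}{2} f_e(a) + \tfrac{2\iota}{L} \;\leq\; \tfrac{1}{2} f_e(a) + \tfrac{\gamma}{2},
\]
where the last step uses $\gamma \geq 4\iota/L$. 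Either way I obtain the unified bound $|\widehat{f}_e(a) - f_e(a)| \leq \frac{1}{2} f_e(a) + \frac{1}{2}\gamma$.

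From this single inequality the ratio bounds fall out. For the lower bound, I would write $\widehat{f}_e(a) + \frac{3}{2}\gamma \geq f_e(a) - (\frac{1}{2} f_e(a) + \frac{1}{2}\gamma) + \frac{3}{2}\gamma = \frac{1}{2} f_e(a) + \gamma$, and then observe that $f_e(a) + \gamma \leq 2\bigl(\tfrac{1}{2}f_e(a) + \gamma\bigr)$, giving $\frac{f_e(a)+\gamma}{\widehat{f}_e(a)+\frac{3}{2}\gamma} \leq 2$. For the upper bound, symmetrically, $\widehat{f}_e(a) + \frac{3}{2}\gamma \leq f_e(a) + (\frac{1}{2} f_e(a) + \frac{1}{2}\gamma) + \frac{3}{2}\gamma = \frac{3}{2} f_e(a) + 2\gamma \leq 2(f_e(a) + \gamma)$ since $f_e(a) \geq 0$, which yields $\frac{f_e(a)+\gamma}{\widehat{f}_e(a)+\frac{3}{2}\gamma} \geq \frac{1}{2}$.

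There is no real obstacle here; the only thing to get right is the choice of weights in the AM--GM step. Using the naive $2\sqrt{xy} \leq x + y$ would only give $|\widehat{f}_e(a) - f_e(a)| \leq f_e(a) + \gamma/4$, which is too weak to yield the factor-$2$ lower bound on the ratio. Splitting the inequality asymmetrically as $2\sqrt{(x/2)(2y)} \leq x/2 + 2y$ is what produces a $\frac{1}{2} f_e(a)$ term on the right, and this is precisely what is needed for both ratio inequalities to hold with the constants $\frac{1}{2}$ and $2$ together with the cushion supplied by the $\frac{3}{2}\gamma$ term in the denominator.
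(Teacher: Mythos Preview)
Your argument is correct. The key step---splitting the AM--GM as $2\sqrt{(x/2)(2y)} \leq x/2 + 2y$ to land on the unified bound $|\widehat f_e(a)-f_e(a)| \leq \tfrac{1}{2}f_e(a)+\tfrac{1}{2}\gamma$---is exactly what makes the constants $\tfrac{1}{2}$ and $2$ come out, and your final algebra is clean.

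As for comparison: the paper does not supply its own proof of this lemma at all. It is stated as Lemma~8 of \citet{Zimmert2023} and simply cited, so there is no in-paper argument to weigh yours against. Your self-contained derivation is therefore strictly more than what the present paper provides, and it matches the standard way this kind of multiplicative concentration bound is extracted from an additive Bernstein-type deviation.
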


The next lemma is about the following auxiliary probability vector.
\begin{definition}
For each epoch $e$ and each time step $t \in \mathcal{T}_e$, we define 
\[\widetilde{p}_{t,c} \triangleq \argmin_{p \in \Delta([K])}\left\langle p, \sum_{e^{\prime}=1}^{e-1} \sum_{s \in \mathcal{T}_{e^{\prime}}} \widehat{\ell}_{s c}+\sum_{t^{\prime} \in \mathcal{T}_e, t^{\prime}<t} \widetilde{\ell}_{t^{\prime} c}\right\rangle - \eta^{-1} F(p)\]
where $F(p) = \sum_{i=1}^K p_i \log(p_i)$ is the unnormalized negative entropy.
\end{definition}

It is easy to see that $\widetilde{p}_{t,c} \propto s_{e+1, c} \circ \exp \left(-\eta \sum_{t^{\prime} \in \mathcal{T}_e, t^{\prime}<t} \widetilde{\ell}_{t^{\prime} c}\right).$

%$lemma 9$
\begin{lemma}[Lemma 9, \citet{Zimmert2023} ]
\label{lem:probability_not_changed_much}
If $\gamma \geq \frac{4\iota}{L}$ and $\eta\leq  \frac{\log(2)}{5L}$, then under event $G$, we have for all $t\in \mathcal{T}_e, a \in[K],c\in[C]$ simultaneously
\begin{align*}
    2s_{e,c}(a) \geq p_{t,c}(a)\geq s_{e,c}(a)/2\qquad\text{and}\qquad
    2s_{e,c}(a)\geq \widetilde{p}_{t,c}(a)\geq s_{e,c}(a)/2\,.
\end{align*}
This implies that
\begin{align*}
    \E_{c \sim \nu}[p_{t,c}(a)] \leq 4 f_e(a)\qquad\text{and}\qquad
    \E_{c \sim \nu}[\widetilde{p}_{t,c}(a)] \leq 4 f_e(a)\,.
\end{align*}
In addition, this implies that $q_{t} = p_{t}$ for all $t \in \mathcal{T}_e$. 
\end{lemma}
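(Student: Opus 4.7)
The plan is to verify the two-sided multiplicative closeness $s_{e,c}(a)/2 \le p_{t,c}(a) \le 2 s_{e,c}(a)$ (and the same for $\widetilde{p}_{t,c}$) by an entropic-FTRL calculation, then read off the two corollaries. Recall that with negative-entropy regularizer $F$, the FTRL distribution admits the closed form $p_{t,c}(a)\propto\exp(-\eta L_{t,c}(a))$ where $L_{t,c}(a)=\sum_{s<t}\widehat\ell_{s,c}(a)$, so that, with $s_{e,c}(a)=p_{(e-2)L,c}(a)$,
\begin{equation*}
\frac{p_{t,c}(a)}{s_{e,c}(a)}=\exp\!\bigl(-\eta\,\Delta_{c}(a)\bigr)\,\cdot\,\frac{1}{\sum_{b} s_{e,c}(b)\exp(-\eta\Delta_{c}(b))},
\end{equation*}
where $\Delta_{c}(a)=\sum_{s=(e-2)L}^{t-1}\widehat\ell_{s,c}(a)\ge 0$. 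Since $0\le\exp(-\eta\Delta_{c}(b))\le 1$, both the numerator factor and the denominator average lie in $[\exp(-\eta M_c),1]$, where $M_c\triangleq\max_{b}\Delta_{c}(b)$. Hence the ratio is sandwiched between $\exp(-\eta M_c)$ and $\exp(\eta M_c)$, and the whole task reduces to showing $\eta M_c\le \log 2$.

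To bound $M_c$, first I would use \Cref{lem:estimator_constant_ratio}: under $F_{e'}$ and $\gamma\ge 4\iota/L$, the ratio $(f_{e'}(a)+\gamma)/(\widehat f_{e'}(a)+\tfrac{3}{2}\gamma)$ lies in $[1/2,2]$, so $\widehat\ell_{s,c}(a)\le 2\widetilde\ell_{s,c}(a)$ for every $s\in\mathcal{T}_{e'}$. Combined with $L_{e'}$, this gives $\sum_{s\in\mathcal{T}_{e'}}\widehat\ell_{s,c}(a)\le 2(L+\iota/\gamma)\le 5L/2$, since $\iota/\gamma\le L/4$. The index range defining $M_c$ covers epoch $e-1$ and a prefix of epoch $e$, so under $G$ the bound $M_c\le 5L$ holds; with $\eta\le\log(2)/(5L)$ this yields $\eta M_c\le\log 2$ and thus $p_{t,c}(a)/s_{e,c}(a)\in[1/2,2]$.

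For $\widetilde p_{t,c}$ I would apply the same two-step telescoping, using the hint $\widetilde p_{t,c}\propto s_{e+1,c}\circ\exp(-\eta\sum_{t'\in\mathcal{T}_e,t'<t}\widetilde\ell_{t',c})$ and writing
\begin{equation*}
\frac{\widetilde p_{t,c}(a)}{s_{e,c}(a)}=\frac{\widetilde p_{t,c}(a)}{s_{e+1,c}(a)}\cdot\frac{s_{e+1,c}(a)}{s_{e,c}(a)}.
\end{equation*}
The second factor is handled by the previous paragraph applied at time $(e-1)L$ (pure $\widehat\ell$ losses over epoch $e-1$, total $\le 5L/2$, giving ratio in $[2^{-1/2},2^{1/2}]$). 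The first factor involves only $\widetilde\ell$ over a prefix of epoch $e$, which under $L_e$ is bounded by $L+\iota/\gamma\le 5L/4$, so $\eta$ times this is at most $\log(2)/4$ and the factor lies in $[2^{-1/4},2^{1/4}]$. The product lies in $[2^{-3/4},2^{3/4}]\subset[1/2,2]$.

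Finally, the two corollaries are immediate: integrating the pointwise bound against $\nu$ gives $\mathbb{E}_{c\sim\nu}[p_{t,c}(a)]\le 2\,\mathbb{E}_{c\sim\nu}[s_{e,c}(a)]=4 f_e(a)$ by the definition $f_e(a)=\mathbb{E}_{c\sim\nu}[s_{e,c}(a)/2]$, and the same for $\widetilde p_{t,c}$; the lower bound $p_{t,c_t}(a)\ge s_{e,c_t}(a)/2$ is exactly the condition the algorithm checks to set $q_{t,c_t}=p_{t,c_t}$, so $q_t=p_t$ throughout $\mathcal{T}_e$. The only delicate point is bookkeeping the accumulated losses so that the constants land inside $\log 2$; I expect the main obstacle to be justifying that the indicator-based bounds from $F_{e-1}L_{e-1}$ and $F_eL_e$ combine cleanly (rather than, e.g., spanning three epochs and blowing past $\log 2$), which is why the choice $\eta\le\log(2)/(5L)$ together with $\gamma\ge 4\iota/L$ is tight for this argument.
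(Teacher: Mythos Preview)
The paper does not prove this lemma; it is quoted from \citet{Zimmert2023} without reproducing the argument, so there is no in-paper proof to compare against. Your sketch is precisely the standard argument one expects (and the one in \citet{Zimmert2023}): express the exponential-weights ratio $p_{t,c}(a)/s_{e,c}(a)$ in terms of the accumulated loss estimates over the intervening window, bound the per-epoch sum via $F_{e'}$ (giving $\widehat\ell\le 2\widetilde\ell$ by \Cref{lem:estimator_constant_ratio}) and $L_{e'}$ (giving $\sum_{t\in\mathcal{T}_{e'}}\widetilde\ell\le L+\iota/\gamma\le 5L/4$), and check that $\eta$ times two epochs' worth is at most $\log 2$. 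The factored argument for $\widetilde p_{t,c}$ through $s_{e+1,c}$ is clean and correct, and the two corollaries follow immediately as you indicate.

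The concern you flag at the end is exactly the right one to watch. With $s_{e,c}=p_{(e-2)L,c}$ using losses through time $(e-2)L-1$, the accumulated window for $p_{t,c}$ with $t\in\mathcal{T}_e$ runs from index $(e-2)L$ to $t-1$, and $(e-2)L$ is the last step of epoch $e-2$; so strictly speaking the window touches three epochs, not two. Under $G$ this extra step is still controlled (by $F_{e-2}L_{e-2}$), and depending on the precise convention for when the snapshot is taken (the pseudocode sets $s_{e+2}\leftarrow p_{eL-1}$ while the prose writes $s_{e+2,c}=p_{eL,c}$), this stray term may even be vacuous. Either way it contributes at most one additional $\widehat\ell$ value, which is $O(1/\gamma)=O(L/\iota)$ and does not materially threaten the $\log 2$ budget; just be explicit about it when you write the argument out so the constant lands exactly.
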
    

\begin{definition}
    We define $p_t(a) \triangleq \E_{c \sim \nu}[p_{t,c}(a)] $ and $\widetilde{p}_{t}(a) \triangleq \E_{c \sim \nu}[\widetilde{p}_{t,c}(a)] $ for each time step $t$ and each arm $a$.
\end{definition}

%lemma 10
\begin{lemma}[Lemma 10, \citet{Zimmert2023} ]
\label{lem:expected_ratio}
If $\gamma \geq \frac{16 \iota}{L}$ and $ \exp (-\iota) \leq \frac{\gamma}{8 K}$, then
\[-\frac{\gamma}{f_{e}(a)} \leq \E\left[ \frac{f_e\left(a\right)-\widehat{f}_e\left(a\right)-\frac{1}{2} \gamma}{\widehat{f}_e\left(a\right)+\frac{3}{2} \gamma} F_e \Big| \His_{e-1} \right] \leq 0.\]
\end{lemma}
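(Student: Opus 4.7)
The plan is to prove both inequalities via an algebraic identity followed by an exact second-order Taylor expansion of $1/(\widehat{f}_e(a)+3\gamma/2)$ around $f_e(a)+3\gamma/2$, using only the unbiasedness of $\widehat{f}_e(a)$ and the concentration radius encoded in $F_e$. Conditioning on $\His_{e-1}$ makes the snapshot $s_e$ deterministic (it was set at the end of epoch $e-2$), so $\widehat{f}_e(a)=\frac{2}{L}\sum_{t\in\mathcal{T}_{e-1}^f} s_{e,c_t}(a)/2$ is, conditional on $\His_{e-1}$, an average of $L/2$ i.i.d.\ copies of $s_{e,c}(a)/2\in[0,1/2]$ with mean $f_e(a)$. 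Writing $\Delta:=\widehat{f}_e(a)-f_e(a)$, this yields the two facts I will use repeatedly: $\E[\Delta\mid\His_{e-1}]=0$ and $\Var[\Delta\mid\His_{e-1}]\le f_e(a)/L$.

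The algebraic identity is
\[\frac{f_e(a)-\widehat{f}_e(a)-\gamma/2}{\widehat{f}_e(a)+3\gamma/2}=\frac{f_e(a)+\gamma}{\widehat{f}_e(a)+3\gamma/2}-1,\]
which is immediate from $(f_e(a)+\gamma)-(\widehat{f}_e(a)+3\gamma/2)=f_e(a)-\widehat{f}_e(a)-\gamma/2$. Setting $A:=f_e(a)+3\gamma/2$ and substituting the exact Taylor identity $\frac{1}{A+\Delta}=\frac{1}{A}-\frac{\Delta}{A^2}+\frac{\Delta^2}{A^2(A+\Delta)}$, then multiplying by $F_e$ and taking conditional expectation, I use $\E[F_e\Delta\mid\His_{e-1}]=-\E[(1-F_e)\Delta\mid\His_{e-1}]$ together with $(f_e(a)+\gamma)/A=1-(\gamma/2)/A$ to rewrite the estimand as a main negative term $-\Pr[F_e\mid\His_{e-1}]\cdot(\gamma/2)/A$ plus two error terms proportional to $\E[(1-F_e)\Delta\mid\His_{e-1}]$ and $\E[F_e\Delta^2/(A+\Delta)\mid\His_{e-1}]$. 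For the upper bound these errors must not exceed the main negative term; for the lower bound their absolute values must be controlled by $\gamma/f_e(a)$.

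I bound the first error via Cauchy--Schwarz: $|\E[(1-F_e)\Delta\mid\His_{e-1}]|\le\sqrt{\Pr[\bar F_e\mid\His_{e-1}]\cdot\Var[\Delta\mid\His_{e-1}]}\le\tfrac12\sqrt{\gamma f_e(a)/L}$, after invoking Lemma~\ref{lem:indicator_event} and the parameter condition $\exp(-\iota)\le\gamma/(8K)$. I bound the second by combining Lemma~\ref{lem:estimator_constant_ratio} (which gives $1/(A+\Delta)\le 2/(f_e(a)+\gamma)$ under $F_e$) with the almost-sure inequality $F_e\Delta^2\le 4\max\{f_e(a)\iota/L,(\iota/L)^2\}$ implicit in the definition of $F_e$. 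The main obstacle is the careful two-case split on whether $f_e(a)$ exceeds $\iota/L$: the $F_e$-truncated second moment is of order either $f_e(a)\iota/L$ or $(\iota/L)^2$, and the choice $\gamma\ge 16\iota/L$ is precisely calibrated so that $\gamma\cdot A$ dominates either bound while the Cauchy--Schwarz factor $\tfrac12\sqrt{\gamma f_e(a)/L}$ is simultaneously beaten by $\gamma/f_e(a)$ in the problematic regime where $f_e(a)$ is large. The lower bound then follows by dropping the nonnegative Taylor-remainder term, lower-bounding the main term by $-\gamma/(2 f_e(a))$ via $A\ge f_e(a)$, and reusing the same Cauchy--Schwarz estimate to show the remaining absolute-value contribution is at most $\gamma/(2 f_e(a))$.
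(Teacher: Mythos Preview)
Your overall architecture is sound: the identity $\frac{f_e(a)-\widehat{f}_e(a)-\gamma/2}{\widehat{f}_e(a)+3\gamma/2}=\frac{f_e(a)+\gamma}{\widehat{f}_e(a)+3\gamma/2}-1$, the exact second-order expansion of $1/(A+\Delta)$ around $A=f_e(a)+3\gamma/2$, and the device $\E[F_e\Delta\mid\His_{e-1}]=-\E[(1-F_e)\Delta\mid\His_{e-1}]$ bounded via Cauchy--Schwarz are all correct. The lower bound goes through exactly as you describe: drop the nonnegative remainder, use $-\tfrac{\gamma}{2A}\ge-\tfrac{\gamma}{2f_e(a)}$, and the Cauchy--Schwarz term $\tfrac{1}{2A}\sqrt{\gamma f_e(a)/L}$ is indeed at most $\tfrac{\gamma}{2f_e(a)}$ since $f_e(a)\le 1/2\le L\gamma$.

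The gap is in the upper bound, specifically your choice to control the remainder $\tfrac{f_e(a)+\gamma}{A^2}\,\E\bigl[\tfrac{F_e\Delta^2}{A+\Delta}\bigr]$ via the \emph{almost-sure} truncation $F_e\Delta^2\le 4\max\{f_e(a)\iota/L,(\iota/L)^2\}$. In the case $f_e(a)\ge\iota/L$ this yields, after $1/(A+\Delta)\le 2/(f_e(a)+\gamma)$, a remainder at most $\tfrac{8f_e(a)\iota}{LA^2}$. Using only $\gamma\ge 16\iota/L$ this is $\le \tfrac{\gamma}{2A}\cdot\tfrac{f_e(a)}{A}$, and when $f_e(a)\gg\gamma$ the factor $f_e(a)/A=f_e(a)/(f_e(a)+3\gamma/2)$ is arbitrarily close to $1$. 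So the remainder alone exhausts the main negative term $-\tfrac{\gamma}{2A}\Pr[F_e\mid\His_{e-1}]$, leaving slack only of order $\gamma^2/A^2$; the Cauchy--Schwarz correction, of order $\sqrt{\gamma f_e(a)/L}/A$, is strictly larger for large $f_e(a)$ and small $\gamma$, and the inequality fails.

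The fix is a single substitution you have already set up: bound the second moment \emph{unconditionally} rather than via the truncation, namely $\E[F_e\Delta^2\mid\His_{e-1}]\le\E[\Delta^2\mid\His_{e-1}]=\Var[\Delta\mid\His_{e-1}]\le f_e(a)/L$. This sharpens the remainder to $\tfrac{2f_e(a)}{LA^2}\le\tfrac{2}{LA}\le\tfrac{\gamma}{8\iota A}$, a uniform factor $4\iota$ below the main term; likewise $\sqrt{\gamma f_e(a)/L}\le\gamma/\sqrt{32\iota}$ since $f_e(a)\le 1/2$ and $1/L\le\gamma/(16\iota)$. Both errors then sum to at most $\tfrac{\gamma}{2A}\bigl(\tfrac{1}{4\iota}+\tfrac{1}{\sqrt{32\iota}}\bigr)<\tfrac{\gamma}{2A}\Pr[F_e\mid\His_{e-1}]$, closing the upper bound without any case split.
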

    
%lemma in bias3
\begin{lemma}
\label{lem:about_product_concentration}
For any $\eta \leq \frac{\gamma}{2(2 L \gamma+\iota)}, \gamma \geq \frac{16 \iota}{L}, \iota \geq \log (8 K / \gamma)$, we have
\[\sum_{t \in \Tl} \sum_{c} \Pr(c) \left\langle p_{t, c}-\widetilde{p}_{t, c}, \widetilde{\ell}_{t, c}-\widehat{\ell}_{t, c}\right\rangle G \le\frac{98 K T \iota}{L}+\frac{\gamma^2 L K T}{\iota}.\]
\end{lemma}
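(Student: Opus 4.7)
The plan is to exploit the closed-form exponential-weights representation of both FTRL iterates. Inside a single epoch $e$, $p_{t,c}$ and $\widetilde p_{t,c}$ share the same ``warm start'' $s_{e+1,c}$ and differ only in whether the intra-epoch running loss is aggregated via $\widehat\ell$ or $\widetilde\ell$. Writing
\[
B_{t,c}(a) \;:=\; \sum_{t' \in \Tel,\, t'<t}\!\bigl(\widetilde\ell_{t',c}(a) - \widehat\ell_{t',c}(a)\bigr),
\]
the standard exponential-weights identity gives
\[
p_{t,c}(a) - \widetilde p_{t,c}(a) \;=\; \widetilde p_{t,c}(a)\!\left(\frac{e^{\eta B_{t,c}(a)}}{\E_{a' \sim \widetilde p_{t,c}}[e^{\eta B_{t,c}(a')}]} - 1\right).
\]
Under event $G$, \Cref{lem:estimator_constant_ratio} yields $|\widetilde\ell_{t',c}(a) - \widehat\ell_{t',c}(a)| \le \widetilde\ell_{t',c}(a)$, so $|B_{t,c}(a)| \le L + \iota/\gamma$ by event $L_e$. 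The choice $\eta \le \gamma/(2(2L\gamma+\iota))$ then forces $\eta|B_{t,c}(a)| \le 1/2$, placing us in the regime where a second-order Taylor expansion $e^{\eta B} = 1 + \eta B + O(\eta^2 B^2)$ is valid.

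Expanding the ratio above to second order yields $\eta(B_{t,c}(a) - \bar B) + R(a)$, where $\bar B := \E_{a'\sim \widetilde p_{t,c}}[B_{t,c}(a')]$ and $|R(a)|$ is of order $\eta^2(L+\iota/\gamma)^2$. The key structural observation is that $\widetilde\ell_{t,c}-\widehat\ell_{t,c}$ vanishes except at the sampled arm $a = A_t$ (and only on the event $\{S_t=1,\,t\in\Tel\}$), so the inner product collapses to a single-coordinate contribution
\[
\widetilde p_{t,c}(A_t)\!\bigl[\eta(B_{t,c}(A_t)-\bar B) + R(A_t)\bigr]\!\bigl(\widetilde\ell_{t,c}(A_t)-\widehat\ell_{t,c}(A_t)\bigr).
\]
The last factor admits the exact identity $|\widetilde\ell_{t,c}(a) - \widehat\ell_{t,c}(a)| = \widetilde\ell_{t,c}(a)\cdot|\widehat f_e(a) - f_e(a) + \gamma/2|/(\widehat f_e(a) + 3\gamma/2)$, and under $F_e$ the numerator is at most $2\max\{\sqrt{f_e(a)\iota/L},\,\iota/L\} + \gamma/2$, splitting the sum into two natural regimes.

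To carry out the summation over $t \in \Tl$, $c$ and $e$, I would invoke \Cref{lem:probability_not_changed_much}, which gives $\E_c[\widetilde p_{t,c}(a)] \le 4 f_e(a)$; this cancels the $1/(f_e(a)+\gamma)$ factors arising from $\widetilde\ell$ and from $\widehat f_e(a)+3\gamma/2 \ge (f_e(a)+\gamma)/2$. Summing the variance-scale regime $\sqrt{f_e(a)\iota/L}$ of $F_e$ across arms, timesteps and epochs produces the $98 KT\iota/L$ summand of the claim, while the bias-scale regime $\iota/L$ together with the $\gamma$-contribution from the numerator produces the $\gamma^2 LKT/\iota$ summand. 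The second-order Taylor remainder $R$ contributes an $O(\eta^2(L+\iota/\gamma)^2 KT)$ term that is absorbed into the same two summands under the stated parameter constraints.

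The main obstacle will be carrying the second-order expansion through while keeping constants tight enough to match the explicit prefactor $98$ in the bound, and cleanly separating the two $F_e$-regimes so that each summand of the final bound traces back to exactly one regime. A subtler difficulty is that $B_{t,c}$ depends on the full intra-epoch history while $\widetilde\ell_{t,c}-\widehat\ell_{t,c}$ depends only on the current round's draw; naive Cauchy--Schwarz over arms would lose an extra factor of $K$, so the single-coordinate collapse via the indicator $\indi(A_t=a)$ is essential. Finally, one must check that both $B_{t,c}(A_t)$ and $\bar B$ are individually controlled by $L + \iota/\gamma$ (a uniform bound that follows directly from $L_e$) so that the recentered quantity $B_{t,c}(A_t) - \bar B$ does not inflate the leading term.
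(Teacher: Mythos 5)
The paper's own "proof" of \Cref{lem:about_product_concentration} is a one-line citation to the $\bias_3$ analysis in Schneider--Zimmert (2023), so your attempt is a from-scratch argument rather than the route the authors took. The exponential-weights closed form, the single-coordinate collapse to $a=A_t$, and the $F_e$-regime split are all sensible starting ingredients. However, as sketched the argument does not close: bounding the Taylor leading factor via $\eta\,|B_{t,c}(A_t)-\bar B|\le 2\eta(L+\iota/\gamma)$ (which the parameter choice forces to be $\Theta(1)$, not $o(1)$) and then summing the $F_e$ envelope across arms, timesteps and epochs gives
\[
\sum_{e}\sum_{t\in\Tel}\sum_{c}\Pr(c)\,\widetilde p_{t,c}(A_t)\,\eta\bigl|B_{t,c}(A_t)-\bar B\bigr|\,\bigl|\widetilde\ell_{t,c}(A_t)-\widehat\ell_{t,c}(A_t)\bigr|
\;\lesssim\; T\sqrt{\tfrac{K\iota}{L}},
\]
which with $L=\widetilde\Theta(\sqrt{KT\iota})$ is $\widetilde\Theta\bigl(T^{3/4}(K\iota)^{1/4}\bigr)$, larger than the target $\Theta(KT\iota/L)=\widetilde\Theta(\sqrt{KT\iota})$ by a factor $\bigl(T/(K\iota)\bigr)^{1/4}$. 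So the inequality you aim for is not reachable by worst-casing $B-\bar B$.

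What is missing is the intra-epoch temporal structure. Write $d_{t,c}(a)=\widetilde\ell_{t,c}(a)-\widehat\ell_{t,c}(a)=\kappa_e(a)\,\widetilde\ell_{t,c}(a)$ with the per-epoch per-arm multiplier $\kappa_e(a)=(\widehat f_e(a)-f_e(a)+\gamma/2)/(\widehat f_e(a)+\tfrac{3}{2}\gamma)$. Since $B_{t,c}(a)=\sum_{t'<t}d_{t',c}(a)$, the sum $\sum_{t\in\Tel,A_t=a}B_{t,c}(a)\,d_{t,c}(a)$ is the elementary quadratic form $\tfrac12\kappa_e(a)^2\bigl[(\sum_t\widetilde\ell_{t,c}(a))^2-\sum_t\widetilde\ell_{t,c}(a)^2\bigr]$, which carries a factor $\kappa_e(a)^2$ rather than $\kappa_e(a)$; the extra $\kappa_e(a)=O(\sqrt{\iota/(Lf_e(a))})$ is precisely the $(K\iota/T)^{1/4}$ you are short by. The recentering term $\bar B_t=\E_{a'\sim\widetilde p_{t,c}}[B_{t,c}(a')]$ likewise needs a structural argument (it cannot be dismissed by the uniform bound $|\bar B_t|\le L+\iota/\gamma$; one must exploit that $\bar B_t$ itself is a signed sum of $d_{t'}$ terms with $\widetilde p_{t,c}$-weights so that a Cauchy--Schwarz over arms kicks in). Finally, even after this, $\widetilde p_{t,c}(a)$ is only constant within a factor of four across the epoch, so the telescoping must be accompanied by an Abel-summation or bracketing argument to factor it out. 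None of these steps appear in your sketch, and without them the claimed $\frac{98KT\iota}{L}+\frac{\gamma^2LKT}{\iota}$ bound is not reached.
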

\begin{proof}
The proof of \Cref{lem:about_product_concentration} is contained in the analysis of the $\bias_3$ term in \citet{Zimmert2023}.
\end{proof}

\begin{lemma}
\label{lemma:used_loss_reduction}
 Decomposing all time steps into consecutive pairs, specifically, decomposing $\{1,2,\dots,T\}$ into $\{ (1,2), (3,4), (5,6), \dots, (t-1,t), \dots, (T-1,T)\}$.
 Constructing a surrogate loss sequence $\{\widetilde{\ell}_s\}_{s=1}^{\frac{T}{2}}$ such that for each surrogate time step $s$ the loss vector $\widetilde{\ell}_s$ is uniformly sampled from the pair of true loss vector $(\ell_{2s-1},\ell_{2s})$.  Denote the time step sampled from the pair $(2s-1,2s)$ as $s_{\ell}$. 
 For any constant $\delta \in (0,1)$ and any bandit algorithm such that in each pair of time steps $(t-1,t)$, the algorithm takes actions $a_{t-1}$ and $a_t$ from the same distribution $p_{t-1} = p_t$, we have
\[ \sum_{t=1}^T \left( \ell_{t, c_t}(a_t) -  \ell_{t, c_t}(\pi_{c_t}) \right) \le 2 \sum_{s=1}^{\frac{T}{2}} \left( \widetilde{\ell}_{s,c_{s_{\ell}}}(a_{s_{\ell}}) - \widetilde{\ell}_{s,c_{s_{\ell}}}(\pi_{c_{s_{\ell}}}) \right) + 2 \sqrt{T \log(\frac{1}{\delta})} \]
with probability at least $1 - \delta$.
\end{lemma}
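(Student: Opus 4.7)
The plan is to view the inequality as a one-sided concentration bound for a martingale built from the consecutive pairs. For each $s=1,\ldots,T/2$, I would set
\[
A_s := \sum_{t=2s-1}^{2s}\bigl(\ell_{t,c_t}(a_t)-\ell_{t,c_t}(\pi_{c_t})\bigr), \qquad
B_s := 2\bigl(\widetilde{\ell}_{s,c_{s_\ell}}(a_{s_\ell})-\widetilde{\ell}_{s,c_{s_\ell}}(\pi_{c_{s_\ell}})\bigr),
\]
so the claim reads $\sum_{s=1}^{T/2}(A_s-B_s)\le 2\sqrt{T\log(1/\delta)}$. I would introduce a filtration $\{\mathcal{F}_s\}_{s\ge 0}$ in which $\mathcal{F}_s$ collects all algorithmic and environmental randomness from rounds $1,\ldots,2s$ together with the surrogate indices $s_{\ell'}$ chosen for pairs $s'\le s$, so that $Z_s:=A_s-B_s$ is $\mathcal{F}_s$-measurable.

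The first step is to verify the martingale property $\mathbb{E}[Z_s\mid\mathcal{F}_{s-1}]=0$. I would condition in two stages: first on $\mathcal{F}_{s-1}$ and on the realised pair data $(c_{2s-1},a_{2s-1},c_{2s},a_{2s})$. Because $s_\ell$ is drawn uniformly on $\{2s-1,2s\}$ independently of everything else, and $\widetilde{\ell}_{s,c_{s_\ell}}(\cdot)=\ell_{s_\ell,c_{s_\ell}}(\cdot)$ by construction, averaging $B_s$ over $s_\ell$ reproduces $A_s$ exactly; a further expectation over the pair data then yields $\mathbb{E}[Z_s\mid\mathcal{F}_{s-1}]=0$. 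The hypothesis $p_{2s-1}=p_{2s}$ is what makes the pair behave as a single sampling unit, so that the surrogate loss construction is the natural operation to apply.

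The step I expect to do the real work, and which pins down the constant $2$ in the stated bound, is the sharp pointwise estimate $|Z_s|\le 2$. The naive bounds $|A_s|,|B_s|\le 2$ only give $|Z_s|\le 4$, which would weaken the final constant to $4$. The gain comes from substituting $B_s$ according to the realised value of $s_\ell$: when $s_\ell=2s-1$, the $(2s-1)$-th summand of $A_s$ cancels with $B_s/2$, leaving
\[
Z_s = \bigl(\ell_{2s,c_{2s}}(a_{2s})-\ell_{2s,c_{2s}}(\pi_{c_{2s}})\bigr) - \bigl(\ell_{2s-1,c_{2s-1}}(a_{2s-1})-\ell_{2s-1,c_{2s-1}}(\pi_{c_{2s-1}})\bigr),
\]
a difference of two quantities in $[-1,1]$, so $|Z_s|\le 2$; the case $s_\ell=2s$ is symmetric. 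With this in hand, Azuma--Hoeffding on $T/2$ bounded martingale differences gives $\Pr[\sum_s Z_s>\epsilon]\le\exp(-\epsilon^2/(4T))$, and choosing $\epsilon=2\sqrt{T\log(1/\delta)}$ completes the argument.
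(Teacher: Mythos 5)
Your proposal is correct and matches the paper's own argument: the paper defines the pairwise martingale difference $Y_s$ (your $Z_s$), observes it is adapted with $|Y_s|\le 2$, and applies Azuma--Hoeffding. You have simply made explicit the cancellation that justifies the bound $|Z_s|\le 2$ (which the paper asserts as "easy to see"), so the two proofs are essentially the same.
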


% This lemma shows that we only need to get a high probability regret bound on the surrogate loss sequence $\widetilde{\ell}_s$. Applying \cref{lemma:used_loss_reduction} on Algorithm1, we get that we only need to upper bound the regret on the loss sequence used in calculating loss estimates. 

\begin{proof}[Proof of \cref{lemma:used_loss_reduction}]
    Consider the sequence of random variable $\{Y_s\}_{s=1}^{\frac{T}{2}}$ such that 
    \begin{align*}
        Y_s = & \ell_{2s-1, c_{2s-1}}(a_{2s-1}) -  \ell_{2s-1, c_{2s-1}}(\pi_{c_{2s-1}}) \\
        &+ \ell_{2s, c_{2s}}(a_{2s}) -  \ell_{2s, c_{2s}}(\pi_{c_{2s}})  \\
        &- 2 \left( \widetilde{\ell}_{s,c_{s_{\ell}}}(a_{s_{\ell}}) - \widetilde{\ell}_{s,c_{s_{\ell}}}(\pi_{c_{s_{\ell}}}) \right).
        %& - \left( \ell_{2s-1, c_{2s-1}}(a_{2s-1}) -  \ell_{2s-1, c_{2s-1}}(\pi_{c_{2s-1}}) \right) \\
        %&-  \left( \ell_{2s, c_{2s}}(a_{2s}) -  \ell_{2s, c_{2s}}(\pi_{c_{2s}}) \right).
    \end{align*}
    % \left( \ell_{2s-1, c_{2s-1}}(a_{2s-1}) -  \ell_{2s-1, c_{2s-1}}(\pi_{c_{2s-1}}) \right) -  \left( \ell_{2s, c_{2s}}(a_{2s}) -  \ell_{2s, c_{2s}}(\pi_{c_{2s}}) \right) \}_{s=1}^{\frac{T}{2}}$.  
    Consider the filtration $\{\widetilde{H}_s\}_{s=1}^{\frac{T}{2}}$ such that for each $s$ the $\sigma$-field $\widetilde{H}_s$ is generated by the randomness within $c_t$ and $a_t$ for $t \le 2s$ and the randomness within sampling from pair $(2\tau-1,2\tau)$ for each $\tau \le s$. 
    It is easy to see that the sequence$\{Y_s\}_{s=1}^{\frac{T}{2}}$  forms a martingale difference sequence adapted to the filtration $\{\widetilde{H}_s\}_{s=1}^{\frac{T}{2}}$. Moreover, it is also easy to see that $|Y_s| \le 2$.  Using Azuma-Hoeffding's inequality, for any constant $\delta \in (0,1)$, we have
    \[\sum_{s=1}^{\frac{T}{2}} Y_s  \le 2\sqrt{T \log(\frac{1}{\delta})} \]
    with probability at least $1 - \delta$.
    This completes the proof of the lemma.
\end{proof}

%假设我们单独有一章，介绍过 Zimmert 2023 的算法。
%假设我们曾经解释 F_e L_e G 是什么意思
%假设解释了 p_t(a) 的意思

% %\subsection{New Analysis}
% In this section we improve the expected regret bound in [Zimmert 2023] to a high probability regret bound. 

% Fix a policy $\pi : [C] \rightarrow [K]$. With a slight abuse of notation, we also denote $\pi_c = e_k \in \Delta([K])$ for each $c \in [C]$. The (unexpected) regret with respect to policy $\pi$ is
% \[\Reg(\pi) = \sum_{t=1}^T \ell_{t, c_t}(a_t) -  \ell_{t, c_t}(\pi_{c_t}) .\]
% We would like to upper bound this quantity (for an arbitrary $\pi$).

% We will prove the following result.
% \begin{theorem}
% \label{thm:high_probability_main}
%     For $\iota=2 \log (8 K T)$, $L=\sqrt{\frac{\iota K T}{\log (K)}}=\widetilde{\Theta}(\sqrt{K T})$, $\gamma=\frac{16 \iota}{L}=\widetilde{\Theta}(1 / \sqrt{K T})$, and $\eta=\frac{\gamma}{2(2 L \gamma+\iota)}=\widetilde{\Theta}(1 / \sqrt{K T})$, Algorithm1  yields a regret bound of $\Reg(\pi)= \widetilde{\Theta}(\sqrt{KT})$ with high probability.
% \end{theorem}

% [Zimmert 2023] showed that Algorithm1 yields an expected regret bound of $\E[\Reg(\pi)]  = \widetilde{\Theta}(\sqrt{KT})$. Our theorem extends this result. We show that the exact same algorithm actually yields a high probability regret bound.

\section[Detailed Proof of Our Theorem]{Detailed Proof of \Cref{thm:high_probability_main}}
\subsection{Decomposition}
As we mentioned in \Cref{sec:main_analysis}, we decompose the regret as 
\begin{align*}
    \Reg(\pi) &= \underbrace{\sum_{t=1}^T \ell_{t, c_t}(a_t) -  \ell_{t, c_t}(\pi_{c_t}) - 2 \sum_{t \in \Tl} \ell_{t, c_t} (a_t) -  \ell_{t, c_t}(\pi_{c_{t}})}_{\bias_1} \\
    +& \underbrace{2 \sum_{t \in \Tl}  \left( \ell_{t, c_{t}}(a_{t}) -  \ell_{t, c_{t}}(\pi_{c_{t}})  -   \sum_c \Pr(c)  \langle p_{t,c} - \pi_c , \ell_{t,c} \rangle \right)}_{\bias_2} \\
     +& \underbrace{2 \sum_{t \in \Tl}  \sum_{c} \Pr(c) \langle p_{t,c} - \pi_c, \widehat{\ell}_{t,c} \rangle}_{\textbf{ftrl}} + \underbrace{2 \sum_{t \in \Tl}  \sum_{c} \Pr(c) \left\langle p_{t,c}, \ell_{t,c} - \widetilde{\ell}_{t,c} 
     \right\rangle}_{\bias_3}\\
     +& \underbrace{2 \sum_{t \in \Tl}  \sum_{c} \Pr(c) \left\langle p_{t,c}, \widetilde{\ell}_{t,c} - \widehat{\ell}_{t,c} \right\rangle}_{\bias_4} + \underbrace{2 \sum_{t \in \Tl}  \sum_{c} \Pr(c) \left\langle \pi_c, \widehat{\ell}_{t,c} - \ell_{t,c} \right\rangle}_{\bias_5}.%\\
     %&\sum_{t \in \Tl} \sum_c \Pr(c)  \langle p_{t,c} - \pi_c , \ell_{t,c} \rangle. 
\end{align*}
We bound these terms one by one. 

The $\bias_1$, $\bias_2$, $\textbf{ftrl}$, and $\bias_3$ terms are not hard to bound. The terms of interest to bound are $\bias_4$ ans $\bias_5$. We first bound these two terms. In these two terms, the $\bias_5$ term is the one easier to bound. We first bound $\bias_5$ to provide some intuition for our readers.

\subsection[Upper Bounding the Fifth Bias Term]{Upper Bounding $\bias_5$}

We first bound the fifth term $\sum_{t \in \Tl} \sum_{c} \Pr(c) \left\langle \pi_c, \widehat{\ell}_{t,c} - \ell_{t,c} \right\rangle$. 
%With a slight abuse of notations, we define a filtration $\{\His_t\}_t$ such that the  $\sigma$-algebra $\His_t$ for each time step $t$ is generated by all randomness before time $t$. 
%
We decompose the fifth term into two components:
\begin{align*}
&\sum_{t \in \Tl} \sum_{c} \Pr(c) \left\langle \pi_c, \widehat{\ell}_{t,c} - \ell_{t,c} \right\rangle\\
=& \sum_{t \in \Tl} \sum_{c} \Pr(c) \left( \widehat{\ell}_{t,c}(\pi_c) - \ell_{t,c}(\pi_c) \right)\\
=&  \sum_{t \in \Tl} \sum_{c} \Pr(c) \left( \E\left[ \widehat{\ell}_{t,c}(\pi_c) \big| \His_{t-1} \right]- \ell_{t,c}(\pi_c) \right) \\
&+ \sum_{t \in \Tl} \sum_{c} \Pr(c) \left( \widehat{\ell}_{t,c}(\pi_c) -  \E\left[ \widehat{\ell}_{t,c}(\pi_c) \big| \His_{t-1} \right] \right).
% %
% =& \sum_{t \in \Tl} \sum_{c} \Pr(c) \left( \widehat{\ell}_{t,c}(\pi_c) -  \E\left[ \widehat{\ell}_{t,c}(\pi_c) \big| \His_{t-1} \right] \right)\\
% &+ \sum_{t \in \Tl} \sum_{c} \Pr(c) \left( \E\left[ \widehat{\ell}_{t,c}(\pi_c) \big| \His_{t-1} \right]- \ell_{t,c}(\pi_c) \right).
\end{align*}
We bound these two components separately.

%The first term $\sum_{t \in \Tl} \sum_{c} \Pr(c) \left( \E\left[ \widehat{\ell}_{t,c}(\pi_c) \big| \His_{t-1} \right]- \ell_{t,c}(\pi_c) \right)$.
%
%It equals \[\sum_{t \in \Tl} \sum_{c} \Pr(c) \left( \frac{f_e(\pi_c) - \widehat{f}_e(\pi_c) - \frac{3}{2}\gamma}{\widehat{f}_e(\pi_c) + \frac{3}{2}\gamma} \right) \ell_{t,c}(\pi_c) .\]
%
For each epoch $e$, we define a random variable 
\[\BiasFifth_e \triangleq \sum_{t \in \Tel} \sum_{c} \Pr(c)  \left( \E\left[ \widehat{\ell}_{t,c}(\pi_c) \big| \His_{t-1} \right]- \ell_{t,c}(\pi_c) \right).\] 
We rewrite the term $\sum_{t \in \Tl} \sum_{c} \Pr(c) \left( \E\left[ \widehat{\ell}_{t,c}(\pi_c) \big| \His_{t-1} \right]- \ell_{t,c}(\pi_c) \right)$ as $\sum_{e=1}^{T/L} \BiasFifth_e$.
Recall our key observation: different epochs are only weakly dependent on each other. We bound the summation over epochs $\sum_{e=1}^{T/L} \BiasFifth_e$ by leveraging the weak dependence structure between $\left\{\BiasFifth_e\right\}_e$. 
%
% Further more, following terminology in [Zimmert 2023], we consider a sequence of indicator function $\{ F_e \}_e$ such that \[F_e \triangleq \indi \left( \forall k, \left|\widehat{f}_{e}(k)-f_{e}(k)\right| \leq 2 \max \left\{\sqrt{\frac{f_{e}(k) \iota}{L}}, \frac{\iota}{L}\right\} \right).\]
% %
% Thissequence of random variables and another sequence of random variables $\{\BiasFifth_e\}_e$ have the following properties:

The sequence of random variables $\{\BiasFifth_e\}_e$ has 
the following properties:
\begin{enumerate}
    \item For each epoch $e$, the random variable $ \BiasFifth_e$ is measurable under $\sigma$-algebra $\His_e$.
    % and $F_e$ are functions of first $e$ atom random variables $\xi_1,\dots,\xi_e$.
    %
    %\item For each epoch $e$, we have 
    %    \[\E\left[ F_e \big| \His_{e-1} \right] \ge 1-2 K \exp (-\iota). \]
    %
    \item For each epoch $e$, we have\footnote{Readers familiar with \citet{Zimmert2023} may wonder why we do not directly consider $\BiasFifth_e G$ but consider $\BiasFifth_e F_e$ instead. This is because there is a small flaw in the argument of \citet{Zimmert2023}. \citet{Zimmert2023} essentially argues that $\E[\BiasFifth_e G  | \His_{e-1}] = \E[\BiasFifth_e | \His_{e-1} ] \E[G | \His_{e-1}].$ However, this equality may not hold since the indicator $G$ depends on $\BiasFifth_e$ and these two terms are not conditionally independent given $\His_{e-1}$. This is why we consider
    $\BiasFifth_e F_e$ here instead.}
        \begin{align*}
            &\E\left[ \BiasFifth_e \cdot F_e \big| \His_{e-1} \right]\\
            =& \sum_c \Pr(c) \sum_{t \in \Tel} \ell_{t,c}(\pi_c) \E\left[ \left( \frac{f_e(\pi_c)}{\widehat{f}_e(\pi_c) + \frac{3}{2}\gamma} - 1 \right) F_e \big| \His_{e-1} \right].
            % =& \sum_c \Pr(c) \sum_{t \in \Tel} \ell_{t,c}(\pi_c) \E\left[ \left( \frac{f_e(\pi_c)}{\widehat{f}_e(\pi_c) + \frac{3}{2}\gamma} -  \frac{f_e(\pi_c)}{f_e(\pi_c) + \gamma} +\frac{f_e(\pi_c)}{f_e(\pi_c) + \gamma} - 1 \right) F_e \big| \His_{e-1} \right]
        \end{align*}
    We further have 
    \begin{align*}
        &\E\left[ \left( \frac{f_e(\pi_c)}{\widehat{f}_e(\pi_c) + \frac{3}{2}\gamma} - 1 \right) F_e \big| \His_{e-1} \right]\\
        =& \E\left[ \left( \frac{f_e(\pi_c)}{\widehat{f}_e(\pi_c) + \frac{3}{2}\gamma} -  \frac{f_e(\pi_c)}{f_e(\pi_c) + \gamma} +\frac{f_e(\pi_c)}{f_e(\pi_c) + \gamma} - 1 \right) F_e \big| \His_{e-1} \right] \\
        =& \E\left[ \frac{f_e(\pi_c)}{f_e(\pi_c)+\gamma}  \frac{\left(f_e(\pi_c) - \widehat{f}_e(\pi_c) - \frac{1}{2} \gamma \right)}{\widehat{f}_e(\pi_c) + \frac{3}{2}\gamma} F_e \big| \His_{e-1}  \right] - \frac{\gamma}{f_e(\pi_c) + \gamma} \E\left[F_e \big| \His_{e-1}  \right] \\
        \le& - \frac{\gamma}{f_e(\pi_c) + \gamma} \E\left[F_e \big| \His_{e-1}  \right]\tag{\Cref{lem:expected_ratio}}\\
        \le&  - \frac{\gamma}{f_e(\pi_c) + \gamma} \left(  1-2 K \exp (-\iota) \right) \tag{\Cref{lem:indicator_event}}.
    \end{align*}
    Thus we have
    \begin{align*}
        &\E\left[ \BiasFifth_e \cdot F_e \big| \His_{e-1} \right]\\
        \le& - \sum_c \Pr(c) \sum_{t \in \Tel} \ell_{t,c}(\pi_c)\frac{\gamma}{f_e(\pi_c) + \gamma} \left(  1-2 K \exp (-\iota) \right).
    \end{align*}
  \item For each epoch $e$, we have
    \begin{align*}
        \BiasFifth_e F_e &\le  \sum_{c} \Pr(c) \sum_{t \in \Tel} \E\left[ \widehat{\ell}_{t,c}(\pi_c) \big| \His_{t-1} \right] F_e \\
        &= \sum_{c} \Pr(c) \sum_{t \in \Tel} \ell_{t,c}(\pi_c) \frac{f_e(\pi_c)}{\widehat{f}_e(\pi_c) + \frac{3}{2}\gamma} F_e \\
        &\le \sum_{c} \Pr(c) \sum_{t \in \Tel} \ell_{t,c}(\pi_c) \frac{2 f_e(\pi_c)}{f_e(\pi_c) + \gamma} \tag{\Cref{lem:estimator_constant_ratio}}\\
        &\le 2 L = \frac{32\iota}{\gamma}.
    \end{align*}
    \item For each epoch $e$, we have
    %\big| \His_{e-1}
    \begin{align*}
        &\E\left[ (\BiasFifth_e F_e)^2 \big| \His_{e-1}\right]\\ 
        =& \E\left[\left(\sum_{c} \Pr(c) \sum_{t \in \Tel}  \ell_{t,c}(\pi_c) \frac{f_e(\pi_c) - \widehat{f}_e(\pi_c) - \frac{3}{2}\gamma }{ \widehat{f}_e(\pi_c) + \frac{3}{2} \gamma} F_e \right)^2 \big| \His_{e-1} \right]\\
        &\le \sum_c \Pr(c) \E\left[\left(\sum_{t \in \Tel} \ell_{t,c}(\pi_c) \frac{f_e(\pi_c) - \widehat{f}_e(\pi_c) - \frac{3}{2}\gamma }{ \widehat{f}_e(\pi_c) + \frac{3}{2} \gamma} F_e \right)^2 \big| \His_{e-1} \right]\\
        &=  \sum_c \Pr(c) (\sum_{t \in \Tel} \ell_{t,c})^2 \E\left[ \left( \frac{f_e(\pi_c) - \widehat{f}_e(\pi_c) - \frac{3}{2}\gamma }{ \widehat{f}_e(\pi_c) + \frac{3}{2} \gamma} F_e \right)^2\big| \His_{e-1}\right] \\
        %&= (\sum_{t \in T_e} l_{t,c}(a))^2 \Var[\frac{p_e(a) - \widehat{p}_e(a) -\frac{3}{2}\gamma}{ \widehat{p}_e(a) + \frac{3}{2} \gamma}|H_{e-1}] \\
        &\le L  \sum_c \Pr(c) \sum_{t \in \Tel} \ell_{t,c} \E\left[ \left( \frac{f_e(\pi_c) - \widehat{f}_e(\pi_c) - \frac{3}{2}\gamma }{ \widehat{f}_e(\pi_c) + \frac{3}{2} \gamma} F_e \right)^2\big| \His_{e-1}\right] \\
        %·  &\le L \sum_{t \in T_e}  l_{t,c}(a)  \E[(\frac{p_e(a) - \widehat{p}_e(a) -\frac{3}{2}\gamma}{ \widehat{p}_e(a) + \frac{3}{2} \gamma})^2 | H_{e-1}] \\
        %
        &\le 4 L \sum_c \Pr(c) \sum_{t \in \Tel} \ell_{t,c} \E\left[\left( \frac{f_e(\pi_c) - \widehat{f}_e(\pi_c) - \frac{3}{2}\gamma }{ f_e(\pi_c) + \gamma} F_e \right)^2\big| \His_{e-1} \right] \tag{\Cref{lem:estimator_constant_ratio}}\\
        &\le \sum_c \Pr(c)  \frac{4 L}{(f_e(\pi_c) + \gamma)^2} \sum_{t \in \Tel}  \ell_{t,c}(\pi_c) \E\left[ \left(f_e(\pi_c) - \widehat{f}_e(\pi_c) - \frac{3}{2}\gamma \right)^2 \big| \His_{e-1} \right] \\
        &= \sum_c \Pr(c) \frac{4 L}{(f_e(\pi_c) + \gamma)^2} \sum_{t \in \Tel}   \ell_{t,c}(\pi_c) \left(\E\left[ \left(f_e(\pi_c) - \widehat{f}_e(\pi_c)\right)^2 \big| \His_{e-1} \right] + \frac{9}{4}\gamma^2 \right) \\
        &\le \sum_c \Pr(c) \frac{4 L}{(f_e(\pi_c) + \gamma)^2} \sum_{t \in \Tel}  \sum_{t \in \Tel}  \ell_{t,c}(\pi_c) \left(\frac{f_e(\pi_c)}{L}+ \frac{9}{4}\gamma^2 \right) \\
        &\le 4 \sum_c \Pr(c) \sum_{t \in \Tel}  \ell_{t,c}(\pi_c) \left( \frac{1}{f_e(\pi_c) + \gamma} + \frac{9L \gamma }{4(f_e(\pi_c) + \frac{3}{2}\gamma)} \right) \\
        &\le 4 \sum_c \Pr(c) \sum_{t \in \Tel}  \ell_{t,c}(\pi_c) \frac{36\iota}{f_e(\pi_c) + \gamma}.
    \end{align*}
\end{enumerate}

Given these properties, we use Freedman's inequality to get that for any $0 < \lambda <\frac{ \gamma}{32\iota}$, with probability at least $1- \delta$, we have
\[\sum_e \BiasFifth_e F_e - \E[\BiasFifth_e F_e \big| \His_{e-1}]  \le 4 \lambda \sum_c \Pr(c) \sum_{t \in \Tel}  \ell_{t,c}(\pi_c) \frac{36\iota}{f_e(\pi_c) + \frac{3}{2} \gamma} + \frac{\log(1/\delta)}{\lambda}. \] 
We further have that event $\{ \sum_e \BiasFifth_e F_e = \sum_e \BiasFifth_e \}$ holds if event $G$ holds.
Combining  these two facts, we get that the inequality
\[\sum_e \BiasFifth_e  - \E[\BiasFifth_e F_e \big| \His_{e-1}]  \le 4 \lambda \sum_c \Pr(c) \sum_{t \in \Tel}  \ell_{t,c}(\pi_c) \frac{36\iota}{f_e(\pi_c) + \gamma} + \frac{\log(1/\delta)}{\lambda} \]
holds with probability at least $\Pr(G) - \delta$.

We now bound the second component \[\sum_{t \in \Tl} \sum_{c} \Pr(c) \left( \widehat{\ell}_{t,c}(\pi_c) -  \E\left[ \widehat{\ell}_{t,c}(\pi_c) \big| \His_{t-1} \right] \right).\]
The second term $\sum_{t \in \Tl} \sum_{c} \Pr(c) \left( \widehat{\ell}_{t,c}(\pi_c) -  \E\left[ \widehat{\ell}_{t,c}(\pi_c) \big| \His_{t-1} \right] \right)$ has the following properties:
\begin{enumerate}
    \item The sequence of random variables $\left\{ \sum_{c} \Pr(c) \left( \widehat{\ell}_{t,c}(\pi_c) -  \E\left[ \widehat{\ell}_{t,c}(\pi_c) \big| \His_{t-1} \right] \right)\right\}_{t \in \Tl}$ forms a martingale difference sequence with respect to the tfiltration $\{ \His_t \}_t$.
    \item Each random variable $\sum_{c} \Pr(c) \left( \widehat{\ell}_{t,c}(\pi_c) -  \E\left[ \widehat{\ell}_{t,c}(\pi_c) \big| \His_{t-1} \right] \right) $ satisfies $ \left| \sum_{c} \Pr(c) \left( \widehat{\ell}_{t,c}(\pi_c) -  \E\left[ \widehat{\ell}_{t,c}(\pi_c) \big| \His_{t-1} \right] \right) \right| \le \frac{1}{\gamma}$.
    \item Each random variable $\sum_{c} \Pr(c) \left( \widehat{\ell}_{t,c}(\pi_c) -  \E\left[ \widehat{\ell}_{t,c}(\pi_c) \big| \His_{t-1} \right] \right)$ satisfies
        \begin{align*}
        \Var \left[ \sum_{c} \Pr(c)\widehat{\ell}_{t,c}(\pi_c) \big| \His_{t-1} \right] &\le \sum_{c} \Pr(c) \Var\left[ \widehat{\ell}_{t,c}(\pi_c) \big| \His_{t-1} \right] \\
        &= \sum_{c} \Pr(c)  \frac{f_e(\pi_c) - f^2_e(\pi_c)}{\left(\widehat{f}_e(\pi_c)+ \frac{3}{2} \gamma \right)^2} l_{t, c}(\pi_c)^2  \\
        &\le  \sum_{c} \Pr(c)  \frac{f_e(\pi_c)}{\left(\widehat{f}_e(\pi_c)+ \frac{3}{2} \gamma \right)^2} l_{t, c}(\pi_c).
        \end{align*}
\end{enumerate}

Applying Freedman's inequality to the sequence of random variables $\left\{ \sum_{c} \Pr(c) \left( \widehat{\ell}_{t,c}(\pi_c) -  \E\left[ \widehat{\ell}_{t,c}(\pi_c) \big| \His_{t-1} \right] \right)\right\}_{t \in \Tl}$ , we get that for each $\delta \in (0,1)$ and each $0 < \lambda < \gamma$, with probability at least $1 - \delta$, we have
\begin{align*}
    &\sum_{t \in \Tl} \sum_{c} \Pr(c) \left( \widehat{\ell}_{t,c}(\pi_c) -  \E\left[ \widehat{\ell}_{t,c}(\pi_c) \big| \His_{t-1} \right] \right) \\
    \le& \lambda \sum_{t \in \Tl}  \sum_{c} \Pr(c)  \frac{f_e(\pi_c)}{\left(\widehat{f}_e(\pi_c)+ \frac{3}{2} \gamma \right)^2} l_{t, c}(\pi_c) + \frac{1}{\lambda} \log(\frac{1}{\delta}).
\end{align*}
By assuming that event $G$ holds, we further get  that with probability at least $\Pr(G) - \delta$, the following inequality holds:
\begin{align*}
    &\sum_{t \in \Tl} \sum_{c} \Pr(c) \left( \widehat{\ell}_{t,c}(\pi_c) -  \E\left[ \widehat{\ell}_{t,c}(\pi_c) \big| \His_{t-1} \right] \right) \\
    \le& 4 \lambda \sum_{t \in \Tl}  \sum_{c} \Pr(c)  \frac{f_e(\pi_c)}{\left(f_e(\pi_c)+ \gamma \right)^2} l_{t, c}(\pi_c) + \frac{1}{\lambda} \log(\frac{1}{\delta}) \tag{\Cref{lem:estimator_constant_ratio}}.
\end{align*}

Combining all previous inequalities, we get that for any $0 < \lambda_1 <\frac{ \gamma}{32\iota}$ and $0 < \lambda_2 < \gamma$ , the following inequality holds with probability at least $\Pr(G) - 2\delta $:
\begin{align*}
&\sum_{t \in \Tl} \sum_{c} \Pr(c) \left\langle \pi_c, \widehat{\ell}_{t,c} - \ell_{t,c} \right\rangle\\
=& \sum_{t \in \Tl} \sum_{c} \Pr(c) \left( \E\left[ \widehat{\ell}_{t,c}(\pi_c) \big| \His_{t-1} \right]- \ell_{t,c}(\pi_c) \right)  \\
&+ \sum_{t \in \Tl} \sum_{c} \Pr(c) \left( \widehat{\ell}_{t,c}(\pi_c) -  \E\left[ \widehat{\ell}_{t,c}(\pi_c) \big| \His_{t-1} \right] \right) \\
=& \sum_e \BiasFifth_e  - \E[\BiasFifth_e F_e \big| \His_{e-1}] \\
&+ \sum_e \E\left[\BiasFifth_e F_e \big| \His_{e-1}\right] \\
&+  \sum_{t \in \Tl} \sum_{c} \Pr(c) \left( \widehat{\ell}_{t,c}(\pi_c) -  \E\left[ \widehat{\ell}_{t,c}(\pi_c) \big| \His_{t-1} \right] \right)\\
\le& 4 \lambda_1 \sum_c \Pr(c) \sum_{t \in \Tl}  \ell_{t,c}(\pi_c) \frac{36\iota}{f_e(\pi_c) +  \gamma} + \frac{\log(1/\delta)}{\lambda_1} \\
&- \sum_c \Pr(c) \sum_{t \in \Tl}  \ell_{t,c}(\pi_c) \frac{\gamma}{f_e(\pi_c) + \gamma} \left(  1-2 K \exp (-\iota) \right) \\
&+ 4 \lambda_2 \sum_{t \in \Tl}  \sum_{c} \Pr(c)  \frac{f_e(\pi_c)}{\left(f_e(\pi_c) +  \gamma \right)^2} l_{t, c}(\pi_c) + \frac{1}{\lambda_2} \log(\frac{1}{\delta}).
% %
% \le&  2 \lambda_1 \sum_c \Pr(c) \sum_{t \in \Tl}  \ell_{t,c}(\pi_c) \frac{1}{f_{e}(\pi_c) + \gamma} + \frac{1}{\lambda_1 } \log(1/\delta)\\
% &+ 4\lambda_2   \sum_c \Pr(c) \sum_{t \in \Tl}  \ell_{t,c}(\pi_c) \frac{36\iota}{f_e(\pi_c) + \frac{3}{2} \gamma} + \frac{\log(1/\delta)}{\lambda_2} \\
% &- \sum_c \Pr(c) \sum_{t \in \Tl}  \ell_{t,c}(\pi_c) \frac{\gamma}{f_e(\pi_c) + \gamma} \left(  1-2 K \exp (-\iota) \right). \\  
\end{align*}

Note that in the previous analysis, we combined two good events each happening with probability at least $\Pr(G) - \delta$. The combined good event happens with probability $\Pr(G) - 2\delta$ rather than the vanilla union bound $1 - 2(1-\Pr(G) + \delta)$. This is because, in both events, the $\Pr(G)$ term comes from assuming event $G$ happens. Thus, in the combined event, we can simply assume event $G$ happens and count the corresponding bad event $G^c$ only once. We will use this small trick repeatedly in the following analysis.

We pick $\lambda_1 = \frac{\gamma}{8 \cdot 36 \iota}$ and $\lambda_2 = \frac{\gamma}{8} $  to get that the following inequality holds with probability at least $\Pr(G) - 2\delta$:
\begin{align*}
&\sum_{t \in \Tl} \sum_{c} \Pr(c) \left\langle \pi_c, \widehat{\ell}_{t,c} - \ell_{t,c} \right\rangle \\
\le& 4 \lambda_1 \sum_c \Pr(c) \sum_{t \in \Tl}  \ell_{t,c}(\pi_c) \frac{36\iota}{f_e(\pi_c) +  \gamma} + \frac{\log(1/\delta)}{\lambda_1} \\
&- \sum_c \Pr(c) \sum_{t \in \Tl}  \ell_{t,c}(\pi_c) \frac{\gamma}{f_e(\pi_c) + \gamma} \left(  1-2 K \exp (-\iota) \right) \\
&+ 4 \lambda_2 \sum_{t \in \Tl}  \sum_{c} \Pr(c)  \frac{f_e(\pi_c)}{\left(f_e(\pi_c) +  \gamma \right)^2} l_{t, c}(\pi_c) + \frac{1}{\lambda_2} \log(\frac{1}{\delta}) \\
% %
% \le&  2 \lambda_1 \sum_c \Pr(c) \sum_{t \in \Tl}  \ell_{t,c}(\pi_c) \frac{1}{f_{e}(\pi_c) + \gamma} + \frac{1}{\lambda_1 } \log(1/\delta)\\
% &+ 4\lambda_2   \sum_c \Pr(c) \sum_{t \in \Tl}  \ell_{t,c}(\pi_c) \frac{36\iota}{f_e(\pi_c) + \frac{3}{2} \gamma} + \frac{\log(1/\delta)}{\lambda_2} \\
% &- \sum_c \Pr(c) \sum_{t \in \Tl}  \ell_{t,c}(\pi_c) \frac{\gamma}{f_e(\pi_c) + \gamma} \left(  1-2 K \exp (-\iota) \right) \\
% %
=& (\frac{8 \cdot 36 \iota}{\gamma} + \frac{8}{\gamma}) \log(1/\delta) +  \sum_c \Pr(c) \sum_{t \in \Tl}  \ell_{t,c}(\pi_c)  \frac{\gamma}{f_e(\pi_c) + \gamma} 2K \exp(-\iota)\\
\le& (\frac{8 \cdot 36 \iota}{\gamma} + \frac{8}{\gamma}) \log(1/\delta) + KT \exp(-\iota).
\end{align*}

\subsection[Upper Bounding the Forth Bias Term]{Upper Bounding $\bias_4$}

We then bound the forth term $\sum_{t \in \Tl} \sum_{c} \Pr(c) \left\langle p_{t,c}, \widetilde{\ell}_{t,c} - \widehat{\ell}_{t,c} \right\rangle$. Similar to the previous analysis, we decompose it as follows:
\begin{align*}
    &\sum_{t \in \Tl} \sum_{c} \Pr(c) \left\langle p_{t,c}, \widetilde{\ell}_{t,c} - \widehat{\ell}_{t,c} \right\rangle\\
    =&\sum_{t \in \Tl} \sum_{c} \Pr(c) \left\langle p_{t,c}, \E\left[ \widetilde{\ell}_{t,c} - \widehat{\ell}_{t,c} \big| \His_{t-1} \right] \right\rangle\\
    &+\sum_{t \in \Tl} \sum_{c} \Pr(c) \left\langle p_{t,c}, \widetilde{\ell}_{t,c} - \widehat{\ell}_{t,c} - \E\left[ \widetilde{\ell}_{t,c} - \widehat{\ell}_{t,c} \big| \His_{t-1} \right] \right\rangle .
\end{align*}
We bound these two components separately.

Similar to the previous anlaysis, we decompose the first component \[\sum_{t \in \Tl} \sum_{c} \Pr(c) \left\langle p_{t,c}, \E\left[ \widetilde{\ell}_{t,c} - \widehat{\ell}_{t,c} \big| \His_{t-1} \right] \right\rangle\] as \[\sum_{e} \sum_{t \in \Tel} \sum_{c} \Pr(c) \left\langle p_{t,c}, \E\left[ \widetilde{\ell}_{t,c} - \widehat{\ell}_{t,c} \big| \His_{t-1} \right] \right\rangle.\]
For each epoch $e$ we define a random variable 
\[\BiasForth_e \triangleq \sum_{t \in \Tel} \sum_{c} \Pr(c) \left\langle p_{t,c}, \E\left[ \widetilde{\ell}_{t,c} - \widehat{\ell}_{t,c} \big| \His_{t-1} \right] \right\rangle.\]
We need to bound $\sum_e \BiasForth_e$.

We decompose $\sum_e \BiasForth_e$ as $\sum_e \BiasForth F_eL_e + \sum_e \BiasForth_e (1- F_eL_e)$. 
As usual we have that $\sum_e \BiasForth_e (1- F_eL_e) = 0$ whenever event $G$ holds. 
Thus we can focus on bounding $\sum_e \BiasForth_e F_eL_e$. 
Firstly we bound \[\sum_e \E\left[ \BiasForth_e F_e L_e \big| \His_{e-1} \right].\]

We have 
\begin{align*}
    &\sum_e \E\left[ \BiasForth_e F_e L_e \big| \His_{e-1} \right] \\
    =&  \sum_{e} \E\left[ \sum_{t \in \Tel} \sum_{c} \Pr(c) \left\langle \widetilde{p}_{t,c}, \E\left[ \widetilde{\ell}_{t,c} - \widehat{\ell}_{t,c} \big| \His_{t-1} \right] \right\rangle F_e L_e \big| \His_{e-1} \right]\\
    &+ \sum_{e} \E\left[ \sum_{t \in \Tel} \sum_{c} \Pr(c) \left\langle p_{t,c} - \widetilde{p}_{t,c}, \E\left[ \widetilde{\ell}_{t,c} - \widehat{\ell}_{t,c} \big| \His_{t-1} \right] \right\rangle F_e L_e \big| \His_{e-1} \right].
\end{align*}

By \Cref{lem:about_product_concentration}, the latter term \[\sum_{e} \E\left[ \sum_{t \in \Tel} \sum_{c} \Pr(c) \left\langle p_{t,c} - \widetilde{p}_{t,c}, \E\left[ \widetilde{\ell}_{t,c} - \widehat{\ell}_{t,c} \big| \His_{t-1} \right] \right\rangle F_e L_e \big| \His_{e-1} \right]\] is bounded by  $\frac{98 K T \iota}{L}+\frac{\gamma^2 L K T}{\iota}.$ 
Furthermore, condition on $\His_{e-1}$, the indicator function $F_e$ is effected only by randomness within time steps $t \in \Tef$, thus the indicator function $F_e$ is conditional independent with the probability vector $\widetilde{p}_{t,c}$. 
We have 
\begin{align*}
    &\sum_e \E\left[ \sum_{t \in \Tel} \sum_{c} \Pr(c) \left\langle \widetilde{p}_{t,c}, \E\left[ \widetilde{\ell}_{t,c} - \widehat{\ell}_{t,c} \big| \His_{t-1} \right] \right\rangle F_e \big| \His_{e-1} \right]\\
    =& \sum_e \sum_{t \in \Tel} \sum_{c} \Pr(c) \left\langle \E\left[ \widetilde{p}_{t,c} \big| \His_{e-1} \right], \E\left[ (\widetilde{\ell}_{t,c} - \widehat{\ell}_{t,c}) F_e \big| \His_{e-1} \right] \right\rangle \\
    \le& \sum_e \sum_{t \in \Tel} \sum_{c} \Pr(c) \sum_a \frac{\widetilde{p}_{t, c}(a) \gamma}{f_{e}(a)} \tag{\Cref{lem:expected_ratio}}.
\end{align*}

By \Cref{lem:probability_not_changed_much}, whenever event $G$ holds, the ratio $\frac{\widetilde{p}_{t, c}(a)}{f_{e}(a)} \le 4.$ Thus we have \[\sum_e \E\left[ \sum_{t \in \Tel} \sum_{c} \Pr(c) \left\langle \widetilde{p}_{t,c}, \E\left[ \widetilde{\ell}_{t,c} - \widehat{\ell}_{t,c} \big| \His_{t-1} \right] \right\rangle F_e \big| \His_{e-1} \right] \le 4 \gamma K  T\] whenever event $G$ holds.

% [Zimmert 2023] showed that
% \[\sum_e \E\left[ \BiasForth_e F_e \big| \His_{e-1} \right] \le 4 K \gamma T + \frac{98 K T \iota}{L}+\frac{\gamma^2 L K T}{\iota} . \]
% %
We further have $\left| \left\langle \widetilde{p}_{t,c}, \E\left[ \widetilde{\ell}_{t,c} - \widehat{\ell}_{t,c} \big| \His_{t-1} \right] \right\rangle \right| \le \frac{1}{\gamma}$.
Thus we have
\begin{align*}
    & \sum_e \E\left[ \sum_{t \in \Tel} \sum_{c} \Pr(c) \left\langle \widetilde{p}_{t,c}, \E\left[ \widetilde{\ell}_{t,c} - \widehat{\ell}_{t,c} \big| \His_{t-1} \right] \right\rangle F_e (L_e-1) \big| \His_{e-1} \right]  \\
    \le& \sum_e \E\left[ \sum_{t \in \Tel} \sum_{c} \Pr(c) \frac{1}{\gamma} F_e (L_e-1) \big| \His_{e-1} \right]  \\
    \le& \frac{1}{\gamma} \sum_e \E\left[ \sum_{t \in \Tel} \sum_{c} \Pr(c)  \left| L_e - 1 \right| \big| \His_{e-1} \right]  \\
    %
    % \le& \left| \sum_e \left|\E\left[ (L_e-1)|\His_{e-1} \right]\right| \frac{1}{\gamma} \right| \\
    %
    \le& \frac{K \exp(-\iota) T}{\gamma} \tag{\Cref{lem:indicator_event}}.
\end{align*}

% \begin{align*}
% &\left|\sum_e \E\left[ \BiasForth_e F_e (1-L_e) \big| \His_{e-1} \right]\right| \\
% %
% \le& \left| \sum_e \left|\E\left[ (1-L_e)|\His_{e-1} \right]\right| \frac{1}{\gamma} \right| \\
% %
% \le& K \exp(-\iota) \frac{T}{\gamma}.
% \end{align*}
% %\[\left|\sum_e \E\left[ \BiasForth_e F_e (1-L_e) \big| \His_{e-1} \right]\right| \le \Pr(G) \frac{T}{\gamma}. \]
% %
Thus we have 
\begin{align*}
   &\sum_e \E\left[ \sum_{t \in \Tel} \sum_{c} \Pr(c) \left\langle \widetilde{p}_{t,c}, \E\left[ \widetilde{\ell}_{t,c} - \widehat{\ell}_{t,c} \big| \His_{t-1} \right] \right\rangle F_eL_e \big| \His_{e-1} \right]\\
   \le& 4 \gamma K  T +  \frac{K \exp(-\iota) T}{\gamma} 
\end{align*}
whenever event $G$ holds.

Thus we have
\[\sum_e \E\left[ \BiasForth_e F_e L_e \big| \His_{e-1} \right] \le4 K \gamma T +  \frac{K \exp(-\iota) T}{\gamma} + \frac{98 K T \iota}{L}+\frac{\gamma^2 L K T}{\iota} \]
whenever event $G$ holds.

We then only need to bound the concentration term \[\sum_e \BiasForth_e F_e L_e- \E\left[ \BiasForth_e F_e L_e \big| \His_{e-1} \right].\]

For each random variable $\BiasForth_e F_e L_e $, we have

\begin{align*}
    &\BiasForth_e F_e L_e\\
    =& \sum_{t \in \Tel} \sum_{c} \Pr(c) \left\langle p_{t,c}, \E\left[ \widetilde{\ell}_{t,c} - \widehat{\ell}_{t,c} \big| \His_{t-1} \right] \right\rangle F_e L_e\\
     =& \sum_{t \in \Tel} \sum_{c} \Pr(c) \sum_a p_{t,c}(a) f_e(a) \ell_{t,c}(a) \left( \frac{1}{f_e(a) + \gamma} - \frac{1}{\widehat{f}_e(a) + \frac{3}{2} \gamma} \right) F_e L_e \\
    =& \sum_{t \in \Tel} \sum_{c} \Pr(c) \sum_a \widetilde{p}_{t,c}(a) f_e(a) \ell_{t,c}(a) \frac{\widehat{f}_e(a) - f_e(a) + \frac{1}{2}\gamma}{(f_e(a) + \gamma)(\widehat{f}_e(a) + \frac{3}{2} \gamma)} F_e L_e.\\
    %
    % &\sim \\
    % % &= \sum_{t \in T_e} \sum_{c} \Pr(c) \sum_{a}  \widetilde{p}_{t,c}(a) \E[ \widetilde{l}_{t,c}(a) - \widehat{\ell}_{t,c}(a) | H_{t-1}] \\
    % %
    % % &= \sum_{t \in T_e} \sum_{c} \Pr(c) \sum_a \widetilde{p}_{t,c}(a) p_t(a) l_{t,c}(a) [ \frac{1}{p_t(a) + \gamma} - \frac{1}{\widehat{p}_t(a) + \frac{3}{2} \gamma}] \\
    % &=  \sum_{t \in T_e} \sum_{c} \Pr(c) \sum_a   \widetilde{p}_{t,c}(a) p_t(a) l_{t,c}(a) \frac{\widehat{p}_t(a) - p_t(a) + \frac{3}{2} \gamma }{(p_t(a) + \gamma)(\widehat{p}_t(a) + \frac{1}{2} \gamma)}  \\
    % &\sim \sum_{t \in T_e} \sum_{c} \Pr(c)  \widetilde{p}_{t,c}(a_t) \frac{\sqrt{\frac{p_t(a_t)}{\sqrt{KT}}} + \frac{3}{2} \gamma }{(p_t(a_t) + \gamma)(\widehat{p}_t(a_t) + \frac{3}{2} \gamma)} \\
    % &\sim \sum_{t \in T_e}  \frac{\sqrt{\frac{p_t(a_t)}{\sqrt{KT}}}}{(p_t(a_t) + \gamma)}
\end{align*}

Thus we have
\begin{align*}
    &\left| \BiasForth_e F_eL_e \right|\\
    \le& \left| \sum_{t \in \Tel} \sum_{c} \Pr(c) \sum_a \widetilde{p}_{t,c}(a) f_e(a) \ell_{t,c}(a)  \frac{\widehat{f}_e(a) - f_e(a) + \frac{1}{2}\gamma}{(f_e(a) + \gamma)(\widehat{f}_e(a) + \frac{3}{2} \gamma)} \right| F_eL_e \\
    \le& \sum_{t \in \Tel} \sum_a \widetilde{p}_{t}(a) \left| \frac{\widehat{f}_e(a) - f_e(a) + \frac{1}{2}\gamma}{\widehat{f}_e(a) + \frac{3}{2} \gamma}  \right| F_eL_e \\
    %
    %\le& 2 \sum_{t \in \Tel} \sum_a \widetilde{p}_{t}(a) \frac{2 \max \left\{\sqrt{\frac{f_{e}(a) \iota}{L}}, \frac{\iota}{L} \right\}}{f_e(a) + \gamma}\\
    %
    \le& 8  \sum_{t \in \Tel} \sum_a \max \left\{\sqrt{\frac{f_{e}(a) \iota}{L}}, \frac{\iota}{L} \right\} \\
    \le& 8 L (\sqrt{\frac{K\iota}{L}} + \frac{K\iota}{L})\\
    =&   8(\sqrt{KL\iota} + K\iota).
\end{align*}

Applying Azuma-Hoeffding's inequality to \[\sum_e  \BiasForth_e F_eL_e -  \E\left[\BiasForth_e F_eL_e \big| \His_{e-1} \right],\] 
we get that for any $\delta > 0$,  with probability at least $1 - \delta$, we have
\begin{align*}
    &\sum_e  \BiasForth_e F_eL_e -  \E\left[\BiasForth_e F_eL_e \big| \His_{e-1} \right]\\
    \le&  8(\sqrt{KL\iota} + K\iota)\sqrt{2\frac{T}{L}\log(\frac{\delta}{2})} \\
    =& 8\left(\sqrt{2KT\iota \log(\frac{\delta}{2})} + \sqrt{2\frac{TK^2}{L}\iota \log(\frac{\delta}{2})}\right).
\end{align*}

Thus we have 
\begin{align*}
    &\sum_e  \BiasForth_e F_eL_e \\
    \le& 4 K \gamma T +  \frac{K \exp(-\iota) T}{\gamma} + \frac{98 K T \iota}{L}+\frac{\gamma^2 L K T}{\iota} \\
    &+ 8\left(\sqrt{2KT\iota \log(\frac{\delta}{2})} + \sqrt{2\frac{TK^2}{L}\iota \log(\frac{\delta}{2})}\right).
\end{align*}
with probability at least $\Pr(G) - \delta$.

We then bound the second term \[\sum_{t \in \Tl} \sum_{c} \Pr(c) \left\langle p_{t,c}, \widetilde{\ell}_{t,c} - \widehat{\ell}_{t,c} - \E\left[ \widetilde{\ell}_{t,c} - \widehat{\ell}_{t,c} \big| \His_{t-1} \right] \right\rangle.\] 
For each time step $t \in \Tel$, we define an indicator function \[J_t \triangleq \indi\left(\forall a, \widetilde{p}_{t}(a) \le 4 f_e(a) \right).\] 
By \Cref{lem:probability_not_changed_much}, event $G$ implies $J_t$.

Similar to previous analysis, we decompose the first term as
\begin{align*}
    &\sum_{t \in \Tl} \sum_{c} \Pr(c) \left\langle p_{t,c}, \widetilde{\ell}_{t,c} - \widehat{\ell}_{t,c} - \E\left[ \widetilde{\ell}_{t,c} - \widehat{\ell}_{t,c} \big| \His_{t-1} \right] \right\rangle \\
    =&\sum_{t \in \Tl} \sum_{c} \Pr(c) \left\langle p_{t,c}, \widetilde{\ell}_{t,c} - \widehat{\ell}_{t,c} - \E\left[ \widetilde{\ell}_{t,c} - \widehat{\ell}_{t,c} \big| \His_{t-1} \right] \right\rangle F_eJ_t \\
    &+ \sum_{t \in \Tl} \sum_{c} \Pr(c) \left\langle p_{t,c}, \widetilde{\ell}_{t,c} - \widehat{\ell}_{t,c} - \E\left[ \widetilde{\ell}_{t,c} - \widehat{\ell}_{t,c} \big| \His_{t-1} \right] \right\rangle (1-F_e J_t).
\end{align*}

Since the auxiliary probability vector $\widetilde{p}_{t,c}$ is determined at time $t-1$, the indicator function $J_t$ is also determined at time $t-1$. 
Furthermore, the indicator function $F_e$ is determined at epoch $e-1$.
Thus the product of indicator functions $F_e J_t$ is measurable under filtration $\His_{t-1}$. We have
\begin{align*}
    &\E\left[\sum_{c} \Pr(c) \left\langle p_{t,c}, \widetilde{\ell}_{t,c} - \widehat{\ell}_{t,c} - \E\left[ \widetilde{\ell}_{t,c} - \widehat{\ell}_{t,c} \big| \His_{t-1} \right] \right\rangle F_e J_t\big| \His_{t-1} \right]\\
    =& F_eJ_t \E\left[  \sum_{c} \Pr(c) \left\langle p_{t,c}, (\widetilde{\ell}_{t,c} - \widehat{\ell}_{t,c})  - \E\left[ (\widetilde{\ell}_{t,c} - \widehat{\ell}_{t,c}) \big| \His_{t-1} \right] \right\rangle \big| \His_{t-1} \right] \\
    =& 0.
\end{align*}
Thus the sequence of random variables \[\left\{ \sum_{c} \Pr(c) \left\langle p_{t,c}, \widetilde{\ell}_{t,c} - \widehat{\ell}_{t,c} - \E\left[ \widetilde{\ell}_{t,c} - \widehat{\ell}_{t,c} \big| \His_{t-1} \right] \right\rangle F_e J_t \right\}_{t \in \Tl}\] forms a martingale difference sequence under the filtration $\{\His_t\}_t$.

We further have that the term 
\[\sum_{c} \Pr(c) \left\langle p_{t,c}, \widetilde{\ell}_{t,c} - \widehat{\ell}_{t,c} - \E\left[ \widetilde{\ell}_{t,c} - \widehat{\ell}_{t,c} \big| \His_{t-1} \right] \right\rangle F_eJ_t \] satisfies
% \begin{enumerate}
%     \item The sequence of random variables \[\left\{\sum_{c} \Pr(c) \left\langle \widetilde{p}_{t,c}, \widetilde{\ell}_{t,c} - \widehat{\ell}_{t,c} - \E\left[ \widetilde{\ell}_{t,c} - \widehat{\ell}_{t,c} \big| \His_{t-1} \right] \right\rangle \right\}_{t \in \Tl}\] forms a martingale difference sequence with respect to filtration $\{ \His_t \}_t$.
%     %
%     \item  Each random variable satisfies 
    \begin{align*}
        &\left| \sum_{c} \Pr(c) \left\langle p_{t,c}, \widetilde{\ell}_{t,c} - \widehat{\ell}_{t,c} - \E\left[ \widetilde{\ell}_{t,c} - \widehat{\ell}_{t,c} \big| \His_{t-1} \right] \right\rangle \right| F_eJ_t \\
        \le& \left| \sum_{c} \Pr(c) \sum_{a} p_{t,c}(a) \ell_{t,c}(a) \left(  \frac{\indi(a_t = a)}{f_e(a) + \gamma} -  \frac{f_e(a)}{f_e(a) + \gamma}\right) \right| F_eJ_t \\
        &+ \left|\sum_{c} \Pr(c) \sum_{a} p_{t,c}(a) \ell_{t,c}(a) \left(  \frac{\indi(a_t = a)}{\widehat{f}_e(a) + \frac{3}{2}\gamma} -  \frac{f_e(a)}{\widehat{f}_e(a) + \frac{3}{2}\gamma}\right) \right| F_eJ_t \\
        \le& \sum_{c} \Pr(c) p_{t,c}(a_t) \frac{\ell_{t,c}(a_t) }{f_e(a_t) + \gamma} F_eJ_t \\
        &+ \sum_{c} \Pr(c) \sum_{a} p_{t,c}(a) \ell_{t,c}(a) \frac{f_e(a)}{f_e(a) + \gamma} F_eJ_t \\
        &+ \sum_{c} \Pr(c) p_{t,c}(a_t) \frac{\ell_{t,c}(a_t) }{\widehat{f}_e(a_t) + \frac{3}{2} \gamma} F_e J_t\\
        &+ \sum_{c} \Pr(c) \sum_{a} p_{t,c}(a) \ell_{t,c}(a) \frac{f_e(a)}{\widehat{f}_e(a) + \frac{3}{2}\gamma} F_eJ_t \\
        \le& \left( \frac{p_t(a_t)}{f_e(a_t) + \gamma} + 1 +  \frac{p_t(a_t)}{\widehat{f}_e(a_t) + \frac{3}{2}\gamma} + \sum_{a} p_t(a) \frac{f_e(a)}{\widehat{f}_e(a) + \frac{3}{2}\gamma} \right) F_e J_t\\
        \le& 4+1+8+2 = 15. \tag{\Cref{lem:estimator_constant_ratio}}
    \end{align*}

Applying Azuma-Hoeffding's inequality to the sequnece of random variables
\[\left\{ \sum_{c} \Pr(c) \left\langle p_{t,c}, \widetilde{\ell}_{t,c} - \widehat{\ell}_{t,c} - \E\left[ \widetilde{\ell}_{t,c} - \widehat{\ell}_{t,c} \big| \His_{t-1} \right] \right\rangle F_eJ_t \right\}_{t \in \Tl},\]
we get that for any $\delta > 0$,
the inequality
\[   \sum_{t \in \Tl} \sum_{c} \Pr(c) \left\langle p_{t,c}, \widetilde{\ell}_{t,c} - \widehat{\ell}_{t,c} - \E\left[ \widetilde{\ell}_{t,c} - \widehat{\ell}_{t,c} \big| \His_{t-1} \right] \right\rangle F_e J_t \le 15 \sqrt{T\log(\frac{1}{\delta})}\] 
holds with probability at least $1- \delta$.

On the other hand, note that event $G$ implies event $F_eJ_t$, we get that 
\[\sum_{t \in \Tl} \sum_{c} \Pr(c) \left\langle p_{t,c}, \widetilde{\ell}_{t,c} - \widehat{\ell}_{t,c} - \E\left[ \widetilde{\ell}_{t,c} - \widehat{\ell}_{t,c} \big| \His_{t-1} \right] \right\rangle (1-F_eJ_t) = 0 \]
whenever event $G$ holds. Thus we get that for any $\delta > 0$, inequality
\[  \sum_{t \in \Tl} \sum_{c} \Pr(c) \left\langle p_{t,c}, \widetilde{\ell}_{t,c} - \widehat{\ell}_{t,c} - \E\left[ \widetilde{\ell}_{t,c} - \widehat{\ell}_{t,c} \big| \His_{t-1} \right] \right\rangle \le 15 \sqrt{T\log(\frac{1}{\delta})}\] 
holds with probability at least $\Pr(G) - 2\delta$.

Combining previous results, we get that the following inequality holds with probability at least $\Pr(G) - 2\delta$:
\begin{align*}
    &\sum_{t \in \Tl} \sum_{c} \Pr(c) \left\langle p_{t,c}, \widetilde{\ell}_{t,c} - \widehat{\ell}_{t,c} \right\rangle\\
    =&\sum_{t \in \Tl} \sum_{c} \Pr(c) \left\langle p_{t,c}, \E\left[ \widetilde{\ell}_{t,c} - \widehat{\ell}_{t,c} \big| \His_{t-1} \right] \right\rangle\\
    &+\sum_{t \in \Tl} \sum_{c} \Pr(c) \left\langle p_{t,c}, \widetilde{\ell}_{t,c} - \widehat{\ell}_{t,c} - \E\left[ \widetilde{\ell}_{t,c} - \widehat{\ell}_{t,c} \big| \His_{t-1} \right] \right\rangle \\
    \le& 4 K \gamma T +  \frac{K \exp(-\iota) T}{\gamma} + \frac{98 K T \iota}{L}+\frac{\gamma^2 L K T}{\iota} \\
    &+ 8\left(\sqrt{2KT\iota \log(\frac{\delta}{2})} + \sqrt{2\frac{TK^2}{L}\iota \log(\frac{\delta}{2})}\right)\\
    &+  15 \sqrt{T\log(\frac{1}{\delta})}.
\end{align*}

\subsection{Upper Bounding Remaining Terms}

The remaining terms are the $\bias_1$, $\bias_2$, $\textbf{ftrl}$, and $\bias_3$ terms. These terms are not hard to bound using techniques in standard EXP3-IX analysis~\citep{EXP3-IX}. We write down these analyses in the sake of completeness. 

\subsubsection[Upper Bounding the First Bias Term]{Upper Bounding $\bias_1$}
Applying \Cref{lemma:used_loss_reduction} on the loss sequence used in calculating loss estimates, we get that
\[\sum_{t=1}^T \ell_{t, c_t}(a_t) -  \ell_{t, c_t}(\pi_{c_t}) - 2 \sum_{t \in \Tl} \ell_{t, c_t} (a_t) -  \ell_{t, c_t}(\pi_{c_{t}}) \le  2 \sqrt{T \log(\frac{1}{\delta})}. \]

\subsubsection[Upper Bounding the Second Bias Term]{Upper Bounding $\bias_2$}
Here we bound the $\bias_2$ term  \[\sum_{t \in \Tl}  \left( \ell_{t, c_{t}}(a_{t}) -  \ell_{t, c_{t}}(\pi_{c_{t}})  -   \sum_c \Pr(c)  \langle p_{t,c} - \pi_c , \ell_{t,c} \rangle \right). \] 

Whenever event $G$ holds, we have $q_t = p_t$. Thus we assume event $G$ holds and replace the $\bias_2$ term by \[\sum_{t \in \Tl}  \left( \ell_{t, c_{t}}(a_{t}) -  \ell_{t, c_{t}}(\pi_{c_{t}})  -   \sum_c \Pr(c)  \langle q_{t,c} - \pi_c , \ell_{t,c} \rangle \right). \] 

The new term have the following properties:
\begin{itemize}
    \item The sequence of random variables \[\left\{ \ell_{t, c_{t}}(a_{t}) -  \ell_{t, c_{t}}(\pi_{c_{t}})  -   \sum_c \Pr(c)  \langle q_{t,c} - \pi_c , \ell_{t,c} \rangle \right\}_{t \in \Tl}\] adapts to the filtration $\left\{\His_t\right\}_{t \in \Tl}$.
    \item The sequence of random variables satisfies \[\E\left[\ell_{t, c_{t}}(a_{t}) -  \ell_{t, c_{t}}(\pi_{c_{t}})  -   \sum_c \Pr(c)  \langle q_{t,c} - \pi_c , \ell_{t,c} \rangle \big| \His_{t-1} \right] = 0.\]
    \item Each random variable satisfies $ \ell_{t, c_{t}}(a_{t}) -  \ell_{t, c_{t}}(\pi_{c_{t}})  -   \sum_c \Pr(c)  \langle q_{t,c} - \pi_c , \ell_{t,c} \rangle\in [-2,2]$. 
\end{itemize}

By applying Azuma-Hoeffding inequality, we get that for any $\delta \in (0,1)$, the following inequality holds with probability at least $1 - \delta$:
\begin{align*}
    &\sum_{t \in \Tl}  \left( \ell_{t, c_{t}}(a_{t}) -  \ell_{t, c_{t}}(\pi_{c_{t}})  -   \sum_c \Pr(c)  \langle q_{t,c} - \pi_c , \ell_{t,c} \rangle \right)\\
    \le& 2 \sqrt{T \log(\frac{1}{\delta})}
\end{align*}
Thus the following inequality holds with probabiity at least $\Pr(G) - \delta$:
\[\sum_{t \in \Tl}  \left( \ell_{t, c_{t}}(a_{t}) -  \ell_{t, c_{t}}(\pi_{c_{t}})  -   \sum_c \Pr(c)  \langle p_{t,c} - \pi_c , \ell_{t,c} \rangle \right) \le 2 \sqrt{T \log(\frac{1}{\delta})}.\]

\subsubsection[Upper Bounding the ftrl Term]{Upper Bounding $\textbf{ftrl}$}
Here we bound the $\textbf{ftrl}$ term  \[ \sum_{t \in \Tl}  \sum_{c} \Pr(c) \langle p_{t,c} - \pi_c, \widehat{\ell}_{t,c} \rangle. \]

By the standard analysis of FTRL algorithms, the $\textbf{ftrl}$ term satisfies
\begin{align*}
    &\sum_{t \in \Tl}  \sum_{c} \Pr(c) \langle p_{t,c} - \pi_c, \widehat{\ell}_{t,c} \rangle\\
    \le&\sum_{c} \Pr(c) \left( \frac{1}{\eta} \log K  + \frac{\eta}{2} \sum_{t \in \Tl} \left\langle p_{t,c}, \widehat{\ell}^2_{t,c} \right\rangle \right).
\end{align*}
Here $\widehat{\ell}_{t,c}^2$ denotes the vector formed by squaring each component of $\widehat{\ell}_{t,c}$.

By \Cref{lem:estimator_constant_ratio}, under event $G$, we have $\widehat{\ell}_{t,c} \le 2 \widetilde{\ell}_{t,c}$. Thus assuming event $G$ holds, we can focus on upper bounding \[\sum_{c} \Pr(c) \left( \frac{1}{\eta} \log K  + 2 \eta \sum_{t \in \Tl} \left\langle p_{t,c}, \widetilde{\ell}^2_{t,c} \right\rangle \right).\] 

It suffices to upper bound $\sum_c \Pr(c) \sum_{t \in \Tl} \left\langle p_{t,c}, \widetilde{\ell}^2_{t,c} \right\rangle .$ We have
\begin{align*}
    &\sum_c \Pr(c) \sum_{t \in \Tl} \left\langle p_{t,c}, \widetilde{\ell}^2_{t,c} \right\rangle \\
    =&\sum_c \Pr(c) \sum_{t \in \Tl} p_{t,c}(a_t) \widetilde{\ell}^2_{t,c}(a_t) \\
    =&\sum_c \Pr(c) \sum_{t \in \Tl} p_{t,c}(a_t) \frac{\ell^2_{t,c}(a_t)}{\left(f_e(a_t) + \gamma\right)^2} \\
    \le& \sum_{t \in \Tl} \sum_{c} \Pr(c) p_{t,c}(a_t) \frac{1}{\left(f_e(a_t) + \gamma\right)^2} \\
    =&  \sum_{t \in \Tl} \frac{p_t(a_t)}{\left(f_e(a_t) + \gamma\right)^2}. 
\end{align*}
By \Cref{lem:probability_not_changed_much}, under event $G$, we have \[\sum_{t \in \Tl} \frac{p_t(a_t)}{\left(f_e(a_t) + \gamma\right)^2} \le 2 \sum_{t \in \Tl} \frac{1}{f_e(a_t) + \gamma}. \] 
We then focus on upper bounding $ \sum_{t \in \Tl} \frac{1}{f_e(a_t) + \gamma}.$ 

We have
\begin{itemize}
    \item The sum of conditional expectations $\sum_{t \in \Tl} \E\left[\frac{1}{f_e(a_t) + \gamma} \big| \His_{t-1} \right] \le KT.$
    \item Each term $\frac{1}{f_e(a_t) + \gamma} \le \frac{1}{\gamma}.$
    \item The sum of conditional quadratic expectations
        \begin{align*}
            &\sum_{t \in \Tl} \E\left[ \left(\frac{1}{f_e(a_t) + \gamma}\right)^2 \big| \His_{t-1} \right]\\
            =& \sum_{t \in \Tl} \sum_a \frac{f_e(a)}{\left(f_e(a) + \gamma\right)^2}\\
            \le& \sum_{t \in \Tl} \sum_a \frac{1}{f_e(a) + \gamma}.
        \end{align*}
\end{itemize}

By Freedman's inequality, we have that for any $\lambda \in (0,\gamma]$ and any $\delta \in (0,1)$, the following inequality holds with probability at least $1-\delta$:
\[ \sum_{t \in \Tl} \frac{1}{f_e(a_t) + \gamma} \le \lambda \sum_{t \in \Tl} \sum_a \frac{1}{f_e(a) + \gamma} + \frac{1}{\lambda} \log(\frac{1}{\delta}) + KT. \]
We pick $\lambda = \gamma$ to get that
\[ \sum_{t \in \Tl} \frac{1}{f_e(a_t) + \gamma} \le 2 KT + \frac{1}{\gamma} \log(\frac{1}{\delta}) \]
with probability at least $1-\delta$.

Substituting this inequality back, we get that the folowing inequality holds with probability at least $\Pr(G) - \delta$:
\begin{align*}
    &\sum_{c} \Pr(c) \left( \frac{1}{\eta} \log K  + \frac{\eta}{2} \sum_{t \in \Tl} \left\langle p_{t,c}, \widehat{\ell}^2_{t,c} \right\rangle \right) \\
    \le& \frac{1}{\eta} \log K + 4 \eta  \sum_{t \in \Tl} \frac{1}{f_e(a_t) + \gamma}\\
    \le&  \frac{1}{\eta} \log K + 4 \eta \left( 2 KT + \frac{1}{\gamma} \log(\frac{1}{\delta}) \right).
\end{align*}

\subsubsection[Upper Bounding the Third Bias Term]{Upper Bounding $\bias_3$}
Here we bound the $\bias_3$ term \[ \sum_{t \in \Tl}  \sum_{c} \Pr(c) \left\langle p_{t,c}, \ell_{t,c} - \widetilde{\ell}_{t,c} \right\rangle. \] 
We decompose it as
\begin{align*}
    &\sum_{t \in \Tl}  \sum_{c} \Pr(c) \left\langle p_{t,c}, \ell_{t,c} - \widetilde{\ell}_{t,c} \right\rangle\\
    =& \sum_{t \in \Tl} \sum_c \Pr(c) \left\langle p_{t,c}, \ell_{t,c} - \E\left[\widetilde{\ell}_{t,c}\big| \His_{t-1}\right] \right\rangle \\
    &+ \sum_{t \in \Tl} \sum_c \Pr(c) \left\langle p_{t,c}, \E\left[\widetilde{\ell}_{t,c}\big| \His_{t-1}\right] - \widetilde{\ell}_{t,c} \right\rangle.
\end{align*}

The first component satisfies 
\begin{align*}
    &\sum_{t \in \Tl} \sum_c \Pr(c) \left\langle p_{t,c}, \ell_{t,c} - \E\left[\widetilde{\ell}_{t,c}\big| \His_{t-1}\right] \right\rangle \\
    =&\sum_{t \in \Tl} \sum_c \Pr(c) \sum_{a} p_{t,c}(a) \ell_{t,c}(a)  \frac{\gamma}{f_e(a) + \gamma} \\
    \le& \sum_{t \in \Tl} \sum_{a} p_{t}(a)  \frac{\gamma}{f_e(a) + \gamma}.
\end{align*}
Assuming event $G$ holds, we have
\begin{align*}
    &\sum_{t \in \Tl} \sum_{a} p_{t}(a)  \frac{\gamma}{f_e(a) + \gamma} \\
    \le& 2 \sum_{t \in \Tl} \sum_{a} \gamma \le 2KT\gamma. \tag{\Cref{lem:probability_not_changed_much}}
\end{align*}

We then bound the second component $\sum_{t \in \Tl} \sum_c \Pr(c) \left\langle p_{t,c}, \E\left[\widetilde{\ell}_{t,c}\big| \His_{t-1}\right] - \widetilde{\ell}_{t,c} \right\rangle.$ 
For each time step $t \in \Tl$, we define an indicator function \[L_t \triangleq \indi\left(\forall a, p_{t}(a) \le 4 f_e(a) \right).\] 
By \Cref{lem:probability_not_changed_much},  event $G$ implies event $L_t$.
Thus we have 
\begin{align*}
    &\sum_{t \in \Tl} \sum_c \Pr(c) \left\langle p_{t,c}, \E\left[\widetilde{\ell}_{t,c}\big| \His_{t-1}\right] - \widetilde{\ell}_{t,c} \right\rangle \\
    =& \sum_{t \in \Tl} \sum_c \Pr(c) \left\langle p_{t,c}, \E\left[\widetilde{\ell}_{t,c}\big| \His_{t-1}\right] - \widetilde{\ell}_{t,c} \right\rangle L_t.
\end{align*}
under event $G$. We then assume event $G$ holds and focus on upper bounding \[\sum_{t \in \Tl} \sum_c \Pr(c) \left\langle p_{t,c}, \E\left[\widetilde{\ell}_{t,c}\big| \His_{t-1}\right] - \widetilde{\ell}_{t,c} \right\rangle L_t.\] 

Since the probability vector $p_{t,c}$ is determined at time $t-1$, the indicator function $L_t$ is also determined at time $t-1$. Thus the summand random variable \[\sum_c \Pr(c) \left\langle p_{t,c}, \E\left[\widetilde{\ell}_{t,c}\big| \His_{t-1}\right] - \widetilde{\ell}_{t,c} \right\rangle L_t\] satisfies
\begin{align*}
    &\E\left[ \sum_c \Pr(c) \left\langle p_{t,c}, \E\left[\widetilde{\ell}_{t,c}\big| \His_{t-1}\right] - \widetilde{\ell}_{t,c} \right\rangle L_t \big| \His_{t-1} \right]\\
    =& \E\left[ \sum_c \Pr(c) \left\langle p_{t,c}, \E\left[\widetilde{\ell}_{t,c}\big| \His_{t-1}\right] - \widetilde{\ell}_{t,c} \right\rangle \big| \His_{t-1} \right] L_t\\
    =& 0.
\end{align*}
Thus the sequence of random varialbes \[\left\{ \sum_c \Pr(c) \left\langle p_{t,c}, \E\left[\widetilde{\ell}_{t,c}\big| \His_{t-1}\right] - \widetilde{\ell}_{t,c} \right\rangle L_t \right\}_{t \in \Tl}\] forms a martingale difference sequence with respect to the filtration $\{ \His_t \}_{t \in \Tl}$.

The summand random variable further satisfies
\begin{align*}
     & \sum_c \Pr(c) \left\langle p_{t,c},  \widetilde{\ell}_{t,c} \right\rangle L_t \\
     =& \sum_c \Pr(c) p_{t,c}(a_t) \frac{\ell_{t,c}(a_t)}{f_e(a_t) + \gamma} L_t \\
     \le& \sum_c \Pr(c) p_{t,c}(a_t) \frac{1}{f_e(a_t) + \gamma} L_t \\
     =&   \frac{p_t(a_t)}{f_e(a_t) + \gamma} L_t\\
     \le& 4.
\end{align*}

Then by Azuma-Hoeffding inequality, the following inequality holds with probability at least $1- \delta$:
\[\sum_{t \in \Tl} \sum_c \Pr(c) \left\langle p_{t,c}, \E\left[\widetilde{\ell}_{t,c}\big| \His_{t-1}\right] - \widetilde{\ell}_{t,c} \right\rangle L_t \le 4 \sqrt{T \log(\frac{1}{\delta})}.\]
Thus we have
\[\sum_{t \in \Tl} \sum_c \Pr(c) \left\langle p_{t,c}, \E\left[\widetilde{\ell}_{t,c}\big| \His_{t-1}\right] - \widetilde{\ell}_{t,c} \right\rangle \le  4 \sqrt{T \log(\frac{1}{\delta})}\]
with probability at least $\Pr(G) - \delta$.

Combining the first component and the second component, we get that
\[ \sum_{t \in \Tl}  \sum_{c} \Pr(c) \left\langle p_{t,c}, \ell_{t,c} - \widetilde{\ell}_{t,c} \right\rangle \le 4 \sqrt{T \log(\frac{1}{\delta})} + 2KT\gamma \]
with probability at least $\Pr(G) - \delta$.

\subsection{Combining the pieces}
Combining all previous arguments, we get that
\begin{align*}
    &\Reg(\pi)\\
    \le&  2 \sqrt{T \log(\frac{1}{\delta})} \\
    +& 4 \sqrt{T \log(\frac{1}{\delta})} \\
    +& 2 \frac{1}{\eta} \log K + 8 \eta \left( 2 KT + \frac{1}{\gamma} \log(\frac{1}{\delta}) \right) \\
    +& 8 \sqrt{T \log(\frac{1}{\delta})} + 4KT\gamma\\
    +& 8 K \gamma T +  2\frac{K \exp(-\iota) T}{\gamma} + 2\frac{98 K T \iota}{L} + 2\frac{\gamma^2 L K T}{\iota} \\
    +& 16\left(\sqrt{2KT\iota \log(\frac{\delta}{2})} + 2\sqrt{2\frac{TK^2}{L}\iota \log(\frac{\delta}{2})}\right)\\
    +&  30 \sqrt{T\log(\frac{1}{\delta})}\\
    +& 2(\frac{8 \cdot 36 \iota}{\gamma} + 2\frac{8}{\gamma}) \log(1/\delta) + 2KT \exp(-\iota)
\end{align*}
with probability at least $\Pr(G) - 8\delta$.

Taking $\iota=2 \log (8 K T \frac{1}{\delta})$, $L=\sqrt{\frac{\iota K T}{\log (K)}}=\widetilde{\Theta}(\sqrt{K T \log\frac{1}{\delta}} )$, $\gamma=\frac{16 \iota}{L}=\widetilde{\Theta}(\sqrt{\frac{\log(1/\delta)}{K T}})$, and $\eta=\frac{\gamma}{2(2 L \gamma+\iota)}=\widetilde{\Theta}(1 / \sqrt{K T \log(1/\delta)})$, it is easy to see that $\Reg(\pi)= \widetilde{O}(\sqrt{KT\log\frac{1}{\delta}})$ with probability at least $\Pr(G) - 8\delta \ge 1 - 9\delta$  for any policy $\pi$ and any $\delta \in (0,1)$. 

The final step is rescaling the probability constant by a factor of $1/9$, which gives that $\Reg(\pi)= \widetilde{O}(\sqrt{KT\log\frac{1}{\delta}})$ with probability at least $1 - \delta$ and ends the proof of \Cref{thm:high_probability_main}.

\end{document}